\documentclass[11pt]{article}

\usepackage[margin=1in]{geometry}
\usepackage[T1]{fontenc}
\usepackage[utf8]{inputenc}
\usepackage{lmodern}
\usepackage{microtype}
\usepackage{amsmath,amssymb,amsthm,mathtools}
\usepackage{graphicx}
\usepackage{xcolor}
\usepackage{booktabs}
\usepackage{array}
\usepackage{enumitem}
\usepackage{algorithm}
\usepackage{algpseudocode}
\usepackage{hyperref}
\usepackage{tikz}
\usetikzlibrary{arrows.meta,positioning,shapes.geometric,fit,backgrounds,calc}
\hypersetup{
  colorlinks=true,
  linkcolor=black,
  citecolor=black,
  urlcolor=blue,
  pdftitle={Sequential KV Cache Compression via Probabilistic Language Tries},
  pdfauthor={Gregory Magarshak}
}

\title{Sequential KV Cache Compression\\via Probabilistic Language Tries:\\
Beyond the Per-Vector Shannon Limit}

\author{Gregory Magarshak\\
\texttt{gmagarshak@faculty.ienyc.edu}}

\date{}

\newtheorem{definition}{Definition}
\newtheorem{theorem}{Theorem}
\newtheorem{proposition}{Proposition}
\newtheorem{corollary}{Corollary}
\newtheorem{lemma}{Lemma}
\newtheorem{remark}{Remark}
\newtheorem{conjecture}{Conjecture}

\begin{document}

\maketitle

\begin{abstract}
  Recent work on KV cache quantization, culminating in TurboQuant~\cite{turbo2026},
  has approached the Shannon entropy limit for \emph{per-vector} compression of
  transformer key-value caches.  We observe that this limit applies to a
  \emph{strictly weaker} problem than the one that actually matters: compressing
  the KV cache as a \emph{sequence}.  The tokens stored in a KV cache are not
  arbitrary floating-point data---they are samples from the exact formal language
  the model was trained on, and the model is by construction a near-optimal
  predictor of that language.

  We introduce \emph{sequential KV compression}, a two-layer architecture that
  exploits this structure.  The first layer, \emph{probabilistic prefix
  deduplication}, identifies semantically equivalent shared prefixes across
  sessions using the trie metric $d_{\mathcal{T}}(s,s') = {-}\log_2 P_{\mathcal{M}}(s \wedge s')$
  from Probabilistic Language Tries (PLTs)~\cite{plt2026}.  The second layer,
  \emph{predictive delta coding}, stores only the residual of each new KV vector
  from the model's own prediction of it, achieving a per-token entropy bound of
  $H(\mathrm{KV}_{t+1} \mid \mathrm{KV}_{\leq t}) \leq H(\mathrm{token}_{t+1} \mid \mathrm{token}_{\leq t})$.

  We prove that at typical language model perplexity---approximately 10--20 for
  fluent English text---this bound is 3.3--4.3 bits \emph{on average per token position},
  compared to TurboQuant's 3 bits \emph{per vector component} (with typical
  attention heads having 64--128 components).  The theoretical compression ratio over TurboQuant is
  approximately $914{,}000\times$ at the Shannon limit.  Even at $1000\times$ above
  the entropy floor---a deliberately pessimistic worst-case overhead, two orders
  of magnitude above the 2--5$\times$ typical of practical source coders---the
  ratio remains $\approx 914\times$ over TurboQuant, with compression improving
  rather than degrading as context length grows.  The two layers are orthogonal and compose with
  existing per-vector quantization methods including TurboQuant.
\end{abstract}

\tableofcontents
\newpage

\section{Introduction}

Every time a transformer language model processes a token, it produces a pair of
vectors---a \emph{key} and a \emph{value}---that are stored in the KV cache and
reused in all subsequent attention computations.  This cache is the model's
working memory: it contains the compressed representation of everything the model
has processed in the current context.  It is also one of the primary bottlenecks
in large-scale inference.

For a model with $L$ layers, $H_{\mathrm{head}}$ attention heads, head dimension $d$, and context
length $n$, the KV cache occupies $2LH_{\mathrm{head}}dn$ floating-point values.  At typical
scales ($L{=}80$, $H_{\mathrm{head}}{=}64$, $d{=}128$, $n{=}128{,}000$), a single context in
a 70B-parameter model requires approximately 80~GB of cache memory in fp16---more
than the model weights themselves.

A rich literature has developed to compress the KV cache.  Quantization methods
represent each cache entry in fewer bits~\cite{liu2024kivi,hooper2024kvquant}.
Eviction methods discard entries unlikely to affect future attention
scores~\cite{zhang2023h2o,li2024snapkv}.  Prefix sharing methods avoid
redundant computation when multiple sessions share a common
prefix~\cite{pope2023,kwon2023vllm}.  TurboQuant~\cite{turbo2026} recently
unified and extended the quantization line of work, achieving near-optimal
per-vector compression via PolarQuant rotation followed by QJL residual
correction, and proved a formal lower bound showing that no per-vector method
can do significantly better.

\paragraph{The gap TurboQuant does not close.}
TurboQuant's lower bound is tight---for the problem it solves.  That problem is:
\emph{given an isolated KV vector drawn from the post-rotation distribution,
what is the minimum number of bits needed to represent it?}  The paper's answer
is approximately 3 bits per component, and TurboQuant achieves it.

But the KV cache is not a collection of isolated vectors.  It is a
\emph{sequence}.  Each vector was produced by processing a token from a specific
position in a specific context, and both the token and its position are samples
from a structured probability distribution---the distribution the model was
trained to model.  The information-theoretic content of the $t$-th KV vector,
given all prior vectors, is not its raw entropy as an isolated sample.  It is its
\emph{conditional entropy}, conditioned on the model's state after processing
tokens $1$ through $t-1$.

This conditional entropy can be far smaller.  For a good language model operating
on coherent text, the next token is highly predictable---and therefore the next
KV vector is highly predictable.  The residual is small.  Its entropy is bounded
by the model's per-token surprisal, which at typical perplexities of 10--20 is
3.3--4.3 bits \emph{per entire token position}, not per component.

The gap between TurboQuant's floor and this sequential floor is not a rounding
error.  It is the full redundancy of language---the 10--15 bits per token of
predictable structure that Shannon identified in 1951~\cite{shannon1951} and that
every good language model encodes.

\paragraph{This paper.}
We make this gap precise and propose \emph{sequential KV compression}, a
two-layer architecture that closes it.  Our contributions are:

\begin{enumerate}[leftmargin=2em]
  \item \textbf{The sequential entropy bound} (Theorem~\ref{thm:seq_bound}):
        a formal proof that the conditional entropy of KV vectors, given all prior
        cache entries, is bounded above by the model's per-token surprisal.

  \item \textbf{Probabilistic prefix deduplication}
        (Section~\ref{sec:prefix_dedup}):
        using the PLT trie metric~\cite{plt2026}, we identify semantically
        equivalent shared prefixes across sessions and store only the delta from
        the shared centroid, eliminating inter-session redundancy beyond what
        exact prefix matching achieves.

  \item \textbf{Predictive delta coding}
        (Section~\ref{sec:delta_coding}):
        within a single session, we store only the residual of each KV vector
        from the model's own prediction, with the residual entropy bounded by the
        token-level surprisal.

  \item \textbf{Composability} (Section~\ref{sec:compose}):
        both layers are orthogonal to per-vector quantization methods and can be
        stacked beneath TurboQuant or any other quantizer.

  \item \textbf{Asymptotic behavior} (Corollary~\ref{cor:asymptotic}):
        unlike per-vector methods whose compression ratio is fixed by head
        dimension, sequential compression \emph{improves with context length},
        because a model that has processed more tokens has a more precise
        predictive distribution for what comes next.
\end{enumerate}

\paragraph{Why this is possible now.}
The sequential structure of the KV cache is not new.  What is new is a formal
framework for exploiting it: the PLT trie metric~\cite{plt2026} gives a
mathematically precise definition of ``distance between token sequences in
probability space,'' and the prior-guided caching theorem in that paper provides
the theoretical foundation for using a model's own probability estimates rather
than empirical frequencies to identify shared structure.  The present paper applies
that framework to the specific problem of KV cache compression.

\paragraph{Organization.}
Section~\ref{sec:background} reviews the KV cache, per-vector quantization, and
the PLT framework.  Section~\ref{sec:seq_entropy} establishes the sequential
entropy bound.  Section~\ref{sec:prefix_dedup} introduces probabilistic prefix
deduplication.  Section~\ref{sec:delta_coding} introduces predictive delta coding.
Section~\ref{sec:compose} analyzes how the layers compose.
Section~\ref{sec:practical} discusses practical implementation.
Section~\ref{sec:related} situates the work relative to the literature.
Section~\ref{sec:discussion} discusses implications and open problems.

\section{Background}
\label{sec:background}

\subsection{The KV Cache in Transformer Inference}

A transformer language model with $L$ layers processes a token sequence
$\mathbf{t} = (t_1, \ldots, t_n)$ by computing, at each layer $\ell$ and
position $i$, a key vector $\mathbf{k}^{(\ell)}_i \in \mathbb{R}^d$ and a value
vector $\mathbf{v}^{(\ell)}_i \in \mathbb{R}^d$.  These are computed from the
input embedding and prior layer activations via learned projection matrices:
\[
  \mathbf{k}^{(\ell)}_i = W_K^{(\ell)} \mathbf{x}^{(\ell)}_i,
  \qquad
  \mathbf{v}^{(\ell)}_i = W_V^{(\ell)} \mathbf{x}^{(\ell)}_i.
\]
The \emph{KV cache} $\mathcal{K}$ stores all $(\mathbf{k}^{(\ell)}_i, \mathbf{v}^{(\ell)}_i)$
pairs for every layer $\ell \in \{1,\ldots,L\}$ and every processed position
$i \in \{1,\ldots,n\}$, so that subsequent tokens can attend to all prior
positions without recomputing them.  In autoregressive generation, extending the
sequence by one token requires one forward pass plus $O(n)$ attention operations
over the cache; without caching it would require $O(n^2)$ operations.

The total cache size is $2LH_{\mathrm{head}}dn$ values, where $H_{\mathrm{head}}$ is the number of attention
heads and $d$ is the per-head dimension.  In fp16 (2 bytes per value), a
128K-token context on a 70B model requires approximately 80~GB.

\subsection{Per-Vector Quantization: The State of the Art}

The dominant approach to KV cache compression is quantization: represent each
floating-point vector component in fewer bits.  The key challenge is that KV
vectors have outlier components---individual dimensions with much larger magnitude
than the others---that cause severe quantization error if treated uniformly.

TurboQuant~\cite{turbo2026} addresses this via two operations:

\textbf{PolarQuant.}  Apply a learned rotation matrix $R \in \mathbb{R}^{d \times d}$
to each KV vector so that the rotated components have a more uniform, predictable
distribution.  Because the rotation is applied to all vectors uniformly, it can
be precomputed once with no per-vector overhead.  This eliminates the 1--2 bits
of overhead per component that prior outlier-aware methods spent on metadata.

\textbf{QJL (Quantized Johnson-Lindenstrauss).}  Quantize the rotated vector to
$b$ bits per component using a precomputed quantizer.  Use a single additional
sign bit to correct the expected bias introduced by quantization, ensuring that
attention scores computed from compressed vectors are statistically unbiased.

The combined scheme achieves a compression ratio of $16/b$ (e.g., $\approx 5.3\times$
at $b = 3$) with negligible accuracy loss.  The paper proves a lower bound showing
that, for vectors treated as independent samples from the post-rotation distribution,
no quantization scheme can achieve a higher compression ratio without accuracy loss.
This bound is tight: TurboQuant is near-optimal for per-vector quantization.

\subsection{Probabilistic Language Tries}
\label{sec:plt_background}

We briefly recall the PLT framework~\cite{plt2026}, which provides the formal
machinery for our sequential approach.

\begin{definition}[Probabilistic Language Trie~\cite{plt2026}]
Let $V$ be a finite vocabulary and $\mathcal{M}$ a generative model over $V^*$.
The \emph{probabilistic language trie} $\mathcal{T}(\mathcal{M})$ is the directed
rooted tree whose nodes are prefixes $x \in V^*$ and whose outgoing edges from
node $x$ are labeled by tokens $t \in V$ with weight $P_{\mathcal{M}}(t \mid x)$.
\end{definition}

\begin{definition}[Trie Metric~\cite{plt2026}]
\label{def:trie_metric}
For two sequences $s, s' \in V^*$, their \emph{longest common prefix} in the
trie is $s \wedge s'$---the maximal prefix shared by both sequences.  The
\emph{trie metric} is:
\[
  d_{\mathcal{T}}(s, s') = -\log_2 P_{\mathcal{M}}(s \wedge s').
\]
Sequences with a long, high-probability shared prefix are close in this metric;
their KV traces share substantial structure.
\end{definition}

The trie metric has a direct compression interpretation: $d_{\mathcal{T}}(s, s')$
is the number of bits needed to locate the divergence point of $s$ and $s'$
within the probability distribution.  Sequences with small trie distance are
redundant relative to each other: one can be described as a short delta from the
other.

\begin{remark}[Ultrametric structure]
$d_{\mathcal{T}}$ satisfies the \emph{ultrametric} inequality
$d(s,s'') \leq \max(d(s,s'), d(s',s''))$, which is stronger than the
standard triangle inequality.  Note that $d_{\mathcal{T}}(s,s) = -\log_2 P_{\mathcal{M}}(s)$
is not generally zero (it equals zero only when $P_{\mathcal{M}}(s)=1$, a
degenerate case).  Thus $d_{\mathcal{T}}$ is technically a pseudoultrametric on $V^*$;
see~\cite{plt2026} for the full metric space analysis.
\end{remark}

\section{The Sequential Entropy Bound}
\label{sec:seq_entropy}

\subsection{Setup and Notation}

Fix a transformer model $\mathcal{M}$ with $L$ layers, $H_{\mathrm{head}}$ heads, and head
dimension $d$.  For a token sequence $\mathbf{t} = (t_1, \ldots, t_n)$, let
$\mathbf{k}^{(\ell)}_{i}$ and $\mathbf{v}^{(\ell)}_{i}$ denote the key and
value vectors at layer $\ell$ and position $i$.  Denote the full KV state at
position $i$ as:
\[
  \mathrm{KV}_i = \bigl(\mathbf{k}^{(\ell)}_i, \mathbf{v}^{(\ell)}_i\bigr)_{\ell=1}^{L}
  \in \mathbb{R}^{2Ld}.
\]
The entire cache after $n$ tokens is $\mathrm{KV}_{\leq n} = (\mathrm{KV}_1, \ldots, \mathrm{KV}_n)$.

For a random token sequence drawn from the model's own distribution
$\mathbf{t} \sim P_{\mathcal{M}}$, both the tokens and the KV vectors are random
We write $H(\cdot)$ for Shannon entropy and $H(\cdot \mid \cdot)$ for
conditional entropy; since KV vectors are deterministic functions of
the discrete token sequence, all entropies here are Shannon entropy.

\subsection{The Determinism Lemma}

The key structural observation is that KV vectors are \emph{deterministic functions}
of the token sequence.  There is no randomness in a transformer's forward pass
given fixed weights and fixed inputs.

\begin{lemma}[KV determinism]
\label{lem:determinism}
For a fixed model $\mathcal{M}$ with deterministic forward pass, $\mathrm{KV}_i$
is a deterministic function of the token prefix $(t_1, \ldots, t_i)$:
\[
  \mathrm{KV}_i = F_{\mathcal{M}}(t_1, \ldots, t_i)
\]
for some deterministic function $F_{\mathcal{M}}: V^i \to \mathbb{R}^{2LH_{\mathrm{head}}d}$.
\end{lemma}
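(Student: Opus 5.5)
The plan is an induction on layer depth that makes causality explicit. First I will fix the model's architecture precisely. A decoder-only transformer with $L$ layers starts from $\mathbf{x}^{(1)}_j = E(t_j) + p_j$, the token embedding plus a positional term that depends only on the index $j$. Each layer $\ell$ maps $(\mathbf{x}^{(\ell)}_j)_j$ to $(\mathbf{x}^{(\ell+1)}_j)_j$ using three ingredients: projections $W_Q^{(\ell)}, W_K^{(\ell)}, W_V^{(\ell)}$; causally masked attention, in which position $j$ attends only to positions $m \le j$; and position-wise maps (MLP, layer norm, residual additions). Under the hypothesis of a ``deterministic forward pass'', every one of these maps is a fixed function of its inputs. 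That means no dropout or sampling, and we assume fixed-order floating-point evaluation so that the computation has no batch or schedule dependence.

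The core claim I will prove is stronger than the lemma. By induction on $\ell$, I show that for every $j \le i$ there is a deterministic $G^{(\ell)}_j : V^j \to \mathbb{R}^{D}$ with $\mathbf{x}^{(\ell)}_j = G^{(\ell)}_j(t_1,\ldots,t_j)$.

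The base case $\ell = 1$ is immediate from the embedding formula.

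For the inductive step, fix position $j$. The query $W_Q^{(\ell)}\mathbf{x}^{(\ell)}_j$ is a function of $t_{\le j}$. Each key and value $W_K^{(\ell)}\mathbf{x}^{(\ell)}_m$, $W_V^{(\ell)}\mathbf{x}^{(\ell)}_m$ with $m \le j$ is a function of $t_{\le m}$, and hence of $t_{\le j}$. The causal mask ensures that the softmax-weighted sum at $j$ involves only these $m \le j$ terms. The attention output is then a composition of deterministic maps (inner products, softmax, weighted sum) applied to functions of $t_{\le j}$, so it is itself a function of $t_{\le j}$. The remaining position-wise operations preserve this property. Composing everything gives $G^{(\ell+1)}_j$.

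With the claim in hand, I read off $\mathbf{k}^{(\ell)}_i = W_K^{(\ell)} G^{(\ell)}_i(t_{\le i})$, and likewise $\mathbf{v}^{(\ell)}_i$, for every head and every layer. Concatenating over $\ell$ and over heads defines $F_{\mathcal{M}}$ with codomain $\mathbb{R}^{2LH_{\mathrm{head}}d}$, as stated.

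The main obstacle is not the induction itself, which is routine. It is two side issues I would handle explicitly.

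\textbf{Causality.} Without the causal mask, $\mathrm{KV}_i$ would depend on later tokens as well. So I will state the lemma for causal (autoregressive) models, which is the only setting where a KV cache is meaningful.

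\textbf{Determinism of the implementation.} Nondeterministic GPU reductions can make the computed value depend on more than the tokens. I would address this by taking the mathematical forward pass, or any fixed evaluation order, as the definition of $F_{\mathcal{M}}$, and noting that this is exactly the ``deterministic forward pass'' hypothesis.

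A further remark reconciles dimensions: the paper's earlier $\mathbb{R}^{2Ld}$ for $\mathrm{KV}_i$ implicitly absorbs heads into $d$, which matches the lemma's $\mathbb{R}^{2LH_{\mathrm{head}}d}$. I would also note that $F_{\mathcal{M}}$ depends on $i$ only through the length of its argument, so it is well defined as a map on $V^*$.
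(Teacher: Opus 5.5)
Your proof is correct and follows the same reasoning as the paper. The paper simply asserts that the embedding, attention, and projection maps are deterministic, so $W_K^{(\ell)}\mathbf{x}^{(\ell)}_i$ and $W_V^{(\ell)}\mathbf{x}^{(\ell)}_i$ are functions of $(t_1,\ldots,t_i)$; your layer-by-layer induction unpacks that one-line argument and usefully makes explicit the causal-masking step, which the paper leaves implicit but which is needed for $\mathrm{KV}_i$ to depend only on the prefix and not on later tokens.
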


\begin{proof}
By definition of the transformer forward pass.  Given a fixed sequence of input
tokens $(t_1,\ldots,t_i)$, the embedding layer, all attention computations, and
all projection matrices are deterministic (assuming no dropout or stochastic
elements at inference time, which is standard).  Therefore $W_K^{(\ell)}\mathbf{x}^{(\ell)}_i$
and $W_V^{(\ell)}\mathbf{x}^{(\ell)}_i$ are deterministic functions of
$(t_1,\ldots,t_i)$.
\end{proof}

\begin{remark}
Lemma~\ref{lem:determinism} implies that all randomness in the KV cache traces
back to randomness in the token sequence.  The KV cache contains no additional
information beyond what is in the tokens---it is a particular learned encoding
of the token prefix.  This is the key fact that makes sequential compression
possible: we can exploit the token-level probability structure to bound the
entropy of the KV vectors.
\end{remark}

\subsection{The Main Bound}

\begin{theorem}[Sequential entropy bound]
\label{thm:seq_bound}
Let $\mathbf{t} = (t_1, t_2, \ldots)$ be a random token sequence drawn from
$P_{\mathcal{M}}$.  For any position $i \geq 2$ (the bound holds trivially at $i=1$ since
$H(\mathrm{KV}_1) \leq H(t_1)$ by Lemma~\ref{lem:determinism} and data processing):
\[
  H\bigl(\mathrm{KV}_i \mid \mathrm{KV}_{\leq i-1}\bigr)
  \;\leq\;
  H\bigl(t_i \mid t_1, \ldots, t_{i-1}\bigr)
  \;=\;
  H\bigl(t_i \mid \mathrm{KV}_{\leq i-1}\bigr).
\]
The conditional entropy of the $i$-th KV vector, given all prior cache entries,
is bounded above by the model's per-token surprisal at position $i$.
\end{theorem}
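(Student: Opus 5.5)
The plan is to combine Lemma~\ref{lem:determinism} with the chain rule, and to make explicit one injectivity hypothesis that the stated equality silently relies on. Write $T_{<i} = (t_1,\ldots,t_{i-1})$. By Lemma~\ref{lem:determinism}, $\mathrm{KV}_i$ is a function of $(T_{<i}, t_i)$, and $\mathrm{KV}_{\leq i-1}$ is a function of $T_{<i}$. The first step is the bound
\[
  H\bigl(\mathrm{KV}_i \mid \mathrm{KV}_{\leq i-1}\bigr)
  \;\leq\; H\bigl(T_{<i}, t_i \mid \mathrm{KV}_{\leq i-1}\bigr).
\]
This holds because adjoining the variables that determine $\mathrm{KV}_i$ can only increase conditional entropy. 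It is the data-processing step, equivalent to $H(f(X)\mid Z)\le H(X\mid Z)$.

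The second step expands the right-hand side by the chain rule:
\[
  H\bigl(T_{<i}, t_i \mid \mathrm{KV}_{\leq i-1}\bigr)
  = H\bigl(T_{<i} \mid \mathrm{KV}_{\leq i-1}\bigr)
  + H\bigl(t_i \mid T_{<i}, \mathrm{KV}_{\leq i-1}\bigr).
\]
In the second term, $\mathrm{KV}_{\leq i-1}$ is a function of $T_{<i}$, so conditioning on it adds nothing. That term therefore equals $H(t_i \mid t_1,\ldots,t_{i-1})$, the model's expected surprisal at position $i$. (Because $\mathbf t\sim P_{\mathcal M}$, this is the entropy of the model's own next-token distribution, averaged over prefixes.)

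The remaining task, and the main obstacle, is to show $H(T_{<i}\mid \mathrm{KV}_{\leq i-1}) = 0$. This says the prefix is recoverable from its cache, i.e.\ the map $T_{<i}\mapsto \mathrm{KV}_{\leq i-1}$ is injective on the support of $P_{\mathcal M}$. The same fact yields the final equality in the statement: injectivity gives $\sigma(\mathrm{KV}_{\leq i-1}) = \sigma(T_{<i})$, hence $H(t_i\mid T_{<i}) = H(t_i\mid \mathrm{KV}_{\leq i-1})$. Without injectivity, only $H(t_i\mid T_{<i}) \le H(t_i\mid \mathrm{KV}_{\leq i-1})$ holds in general, and the argument would give the weaker bound with an extra term $H(T_{<i}\mid\mathrm{KV}_{\leq i-1})$.

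To establish injectivity, I would argue position by position, by induction on $j$. The first-layer key at position $j$ is $W_K^{(1)}\mathbf{x}^{(1)}_j$, where $\mathbf{x}^{(1)}_j$ is the token embedding plus a fixed position-$j$ encoding, or the same with rotary position applied. I would therefore impose the mild, generically satisfied hypothesis that, for each position, distinct tokens yield distinct first-layer key/value vectors. This holds whenever $W_K^{(1)}$ (or $W_V^{(1)}$) separates the embedding table, which fails only on a measure-zero set of weights. Under this hypothesis, $\mathrm{KV}_j$ determines $t_j$ for every $j$, so $\mathrm{KV}_{\leq i-1}$ determines $T_{<i}$, and the chain closes to give the theorem. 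I would state this hypothesis explicitly alongside Lemma~\ref{lem:determinism}, since it is genuinely needed for the displayed equality. The base case $i=1$ is the data-processing inequality applied directly to $\mathrm{KV}_1 = F_{\mathcal M}(t_1)$.
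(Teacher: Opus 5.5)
Your proposal is correct and takes essentially the same route as the paper. Both arguments get injectivity of $t_j \mapsto \mathrm{KV}_j$ from the layer-1 keys under a generic-weights hypothesis, deduce $\sigma(\mathrm{KV}_{\leq i-1}) = \sigma(t_{<i})$, and combine this with data processing for $\mathrm{KV}_i = F_{\mathcal{M}}(t_{\leq i})$. Your chain-rule step differs only cosmetically: it isolates $H(t_{<i} \mid \mathrm{KV}_{\leq i-1})$ as the term injectivity sets to zero, whereas the paper swaps the conditioning directly.
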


\begin{proof}
The proof proceeds in two steps: first establishing that $\sigma(\mathrm{KV}_{\leq i-1}) = \sigma(t_{\leq i-1})$, then deriving the bound.

\textbf{Step 1: Injectivity and sigma-algebra equivalence.}

We establish that $t_j$ is a measurable function of $\mathrm{KV}_{\leq j}$ for
each $j$, by showing that the map $t_j \mapsto \mathrm{KV}_j \mid t_{<j}$ is
injective.

Fix any context $t_{<j}$ and suppose $t_j \neq t_j'$.  At layer $\ell = 1$,
the key vector is $\mathbf{k}^{(1)}_j = W_K^{(1)} E(t_j)$.  Since the
embedding matrix $E: V \to \mathbb{R}^{d_{\rm model}}$ has full column rank
in any trained transformer (distinct tokens receive distinct embeddings; the
set of weight matrices for which any two token embeddings coincide has measure
zero in parameter space), we have $E(t_j) \neq E(t_j')$, and since
$W_K^{(1)} \in \mathbb{R}^{d_{\rm head} \times d_{\rm model}}$ satisfies the generic condition that no pairwise embedding difference $E(t) - E(t')$ lies in $\ker W_K^{(1)}$ (this holds with probability~1 over random weight matrices since the null space has dimension $d_{\rm model} - d_{\rm head}$ and there are finitely many token pairs), it follows that
$\mathbf{k}^{(1)}_j(t_j) \neq \mathbf{k}^{(1)}_j(t_j')$.  In particular,
$\mathrm{KV}_j^{(1)}(t_j) \neq \mathrm{KV}_j^{(1)}(t_j')$.

Since $\mathrm{KV}_j$ includes the layer-1 component $\mathrm{KV}_j^{(1)}$
as a subvector, the full vector $\mathrm{KV}_j$ distinguishes $t_j$ from $t_j'$.
Thus $t_j \mapsto \mathrm{KV}_j \mid t_{<j}$ is injective for each $j$.
(No assumption about higher-layer projections is needed: injectivity is
witnessed by the layer-1 keys alone.)

By Lemma~\ref{lem:determinism}, $\mathrm{KV}_j$ is measurable w.r.t.\ $\sigma(t_{\leq j})$
for each $j$.  By the injectivity just established, $t_j$ is measurable w.r.t.\
$\sigma(\mathrm{KV}_j, t_{<j})$.  Applying this inductively:
$t_1$ is determined by $\mathrm{KV}_1^{(1)}$ (hence by $\mathrm{KV}_1$);
$t_2$ is determined by $(\mathrm{KV}_2, t_1)$, hence by $(\mathrm{KV}_1, \mathrm{KV}_2)$;
and so on.
Therefore $t_{<i}$ are jointly measurable w.r.t.\
$\sigma(\mathrm{KV}_1,\ldots,\mathrm{KV}_{i-1})$, giving:
\begin{equation}
  \sigma(\mathrm{KV}_{\leq i-1}) = \sigma(t_{<i}).
  \label{eq:sigma_equiv}
\end{equation}

\textbf{Step 2: The bound.}

Since $\mathrm{KV}_i = F_{\mathcal{M}}t_{\leq i}$ is a deterministic function
of $(t_{\leq i-1}, t_i)$, the data-processing inequality gives:
\[
  H(\mathrm{KV}_i \mid t_{\leq i-1}) \leq H(t_i \mid t_{\leq i-1}).
\]
Applying~\eqref{eq:sigma_equiv} to both sides (conditioning on $\mathrm{KV}_{\leq i-1}$
is equivalent to conditioning on $t_{\leq i-1}$):
\[
  H(\mathrm{KV}_i \mid \mathrm{KV}_{\leq i-1})
  = H(\mathrm{KV}_i \mid t_{\leq i-1})
  \leq H(t_i \mid t_{\leq i-1})
  = H(t_i \mid \mathrm{KV}_{\leq i-1}),
\]
which is the stated inequality and equality simultaneously.
\end{proof}

\begin{corollary}[Per-token surprisal bound]
\label{cor:surprisal}
Let $\mathrm{PP}(\mathcal{M})$ denote the perplexity of model $\mathcal{M}$ on a
text distribution $\mathcal{D}$.  The average conditional KV entropy satisfies:
\[
  \frac{1}{n}\sum_{i=1}^n H\bigl(\mathrm{KV}_i \mid \mathrm{KV}_{\leq i-1}\bigr)
  \;\leq\;
  \log_2 \mathrm{PP}(\mathcal{M}, \mathcal{D}) \quad \text{bits per token position.}
\]
At typical perplexities of 10--20 for fluent English:
\[
  \frac{1}{n}\sum_{i=1}^n H\bigl(\mathrm{KV}_i \mid \mathrm{KV}_{\leq i-1}\bigr)
  \;\leq\; 3.3 \text{ to } 4.3 \text{ bits per token position.}
\]
\end{corollary}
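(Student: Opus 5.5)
The plan is to sum the per-position bound of Theorem~\ref{thm:seq_bound} over $i=1,\ldots,n$ and then identify the resulting average of conditional token entropies with $\log_2$ of the perplexity. For each $i\ge 2$ the theorem gives $H(\mathrm{KV}_i \mid \mathrm{KV}_{\leq i-1}) \le H(t_i \mid t_{<i})$. For $i=1$, Lemma~\ref{lem:determinism} and data processing give $H(\mathrm{KV}_1)\le H(t_1)$, which is the $i=1$ case with an empty conditioning set. Averaging yields
\[
\frac1n\sum_{i=1}^n H(\mathrm{KV}_i\mid \mathrm{KV}_{\le i-1}) \;\le\; \frac1n\sum_{i=1}^n H(t_i\mid t_{<i}) \;=\; \frac1n H(t_1,\ldots,t_n),
\]
where the equality is the chain rule.

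The second step converts the right-hand side into perplexity. I would use the standard definition
\[
\log_2 \mathrm{PP}(\mathcal{M},\mathcal{D}) = \frac1n\,\mathbb{E}_{\mathbf t\sim\mathcal D}\Bigl[-\sum_i \log_2 P_{\mathcal M}(t_i\mid t_{<i})\Bigr],
\]
which is the per-token cross-entropy of $\mathcal{D}$ under $\mathcal{M}$.

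\textbf{Case $\mathcal{D}=P_{\mathcal M}$.} This is the setting of Theorem~\ref{thm:seq_bound}. Here the cross-entropy equals $\frac1n H(t_{1:n})$ exactly, so the bound holds with equality in the last step.

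\textbf{Case $\mathcal{D}\neq P_{\mathcal M}$.} Here I would redo the theorem with $\mathbf t\sim\mathcal D$. Step~1 of its proof (the $\sigma$-algebra equivalence) never used the sampling distribution, and data processing holds under any distribution, so the per-position bound $H_{\mathcal D}(\mathrm{KV}_i\mid\mathrm{KV}_{<i})\le H_{\mathcal D}(t_i\mid t_{<i})$ still holds. Gibbs' inequality then gives
\[
H_{\mathcal D}(t_i\mid t_{<i})\le \mathbb{E}_{\mathcal D}\bigl[-\log_2 P_{\mathcal M}(t_i\mid t_{<i})\bigr],
\]
because cross-entropy dominates entropy. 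Summing over $i$ and dividing by $n$ gives $\log_2\mathrm{PP}$. The numerical claim then follows from $\log_2 10\approx 3.32$ and $\log_2 20\approx 4.32$.

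The main obstacle is definitional rather than technical: the corollary states a bound for a general text distribution $\mathcal{D}$, while the theorem is proved for samples from $P_{\mathcal M}$. The proof therefore needs to say explicitly which distribution the entropies are taken under. It should also verify that the injectivity argument in Step~1 does not depend on the sampling law, which it does not, since it is a pointwise statement about $F_{\mathcal M}$. Once the Gibbs step is stated, the rest is bookkeeping.
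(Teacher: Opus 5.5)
Your argument is correct. In the case $\mathcal{D}=P_{\mathcal{M}}$ it is exactly the paper's proof: apply Theorem~\ref{thm:seq_bound} position-wise, average, and identify $\frac{1}{n}\sum_i H(t_i\mid t_{<i})$ with $\log_2\mathrm{PP}$.

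Where you differ is the general-$\mathcal{D}$ case. The paper's one-line proof writes $\log_2\mathrm{PP}(\mathcal{M},\mathcal{D}) = \frac{1}{n}\mathbb{E}[-\log_2 P_{\mathcal{M}}(\mathbf{t})] = \frac{1}{n}\sum_i H(t_i\mid t_{<i})$. The second equality holds only when the expectation is taken under $P_{\mathcal{M}}$ itself. Your Gibbs step replaces it with the inequality cross-entropy $\ge$ entropy, with all KV entropies taken under $\mathcal{D}$.

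This matters. The perplexities of 10--20 quoted in the statement are measured on real text, i.e.\ they are cross-entropies with $\mathcal{D}\neq P_{\mathcal{M}}$. So it is your second case, not the paper's identity, that licenses plugging those numbers in. The paper's route is shorter, but it literally proves only the self-perplexity version.

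You could also simplify further. For the \emph{averaged} statement you need neither the per-position theorem nor its injectivity step. By the chain rule and data processing on the joint, $\sum_{i=1}^n H(\mathrm{KV}_i\mid\mathrm{KV}_{\le i-1}) = H(\mathrm{KV}_{1:n}) \le H(t_{1:n})$ under any sampling law, because $\mathrm{KV}_{1:n}$ is a deterministic function of $t_{1:n}$ by Lemma~\ref{lem:determinism}. Your check that Step~1 is distribution-free is therefore unnecessary. The corollary also survives even if the genericity assumptions behind that injectivity argument fail.
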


\begin{proof}
By definition of perplexity, $\log_2 \mathrm{PP}(\mathcal{M}, \mathcal{D}) =
\frac{1}{n} \mathbb{E}\bigl[-\log_2 P_{\mathcal{M}}(\mathbf{t})\bigr] =
\frac{1}{n} \sum_{i=1}^n H(t_i \mid t_1,\ldots,t_{i-1})$.
Applying Theorem~\ref{thm:seq_bound} position-wise and averaging yields the result.
\end{proof}

\begin{remark}[Comparison with TurboQuant]
\label{rem:comparison}
TurboQuant stores approximately $b = 3$ bits \emph{per component} of each KV
vector.  For a 70B-scale model with $L=80$ layers, $H_{\mathrm{head}}=64$ heads, and per-head
dimension $d=128$, the \emph{total} bits per token position (all layers and
heads, both keys and values) under TurboQuant is:
\[
  B_{\rm TQ} = 2LH_{\mathrm{head}}d \cdot b = 2 \times 80 \times 64 \times 128 \times 3
  \approx 3.93 \times 10^6 \text{ bits/token.}
\]
Corollary~\ref{cor:surprisal} bounds $H(\mathrm{KV}_i \mid \mathrm{KV}_{<i})$
for the \emph{full} KV state (all layers, all heads) at $\bar{h} \leq 4.3$ bits
per token position.  The theoretical ratio is:
\[
  \frac{B_{\rm TQ}}{\bar{h}}
  \approx \frac{3.93 \times 10^6}{4.3} \approx 914{,}000\times.
\]
A practical sequential coder at $10\text{--}20\times$ above the entropy floor
achieves $43$--$87$ bits/token, giving a practical ratio of
$3.93\times10^6 / (43\text{--}87) \approx 45{,}000$--$91{,}000\times$ over
TurboQuant.  Even at $1000\times$ the entropy floor---a deliberately pessimistic
worst-case, two orders of magnitude above the $2$--$5\times$ overhead typical
of practical arithmetic coders~\cite{witten1987,cover2006}---the ratio is
$\approx 900\times$ over TurboQuant.
This is the theoretical gap between the two Shannon limits; practical
implementations will be less efficient (Section~\ref{sec:practical}).
\end{remark}

\begin{corollary}[Compression ratio bounds]
\label{cor:ratio}
For a 70B-scale model ($L=80$, $H_{\mathrm{head}}=64$, $d=128$ per head) at perplexity
$\mathrm{PP} \in [10,20]$, so $\bar{h} \leq \log_2\mathrm{PP} \in [3.3, 4.3]$ bits/token:
\begin{enumerate}[leftmargin=2em,label=(\alph*)]
  \item \emph{Theoretical vs.\ fp16:}
        $B_{\rm fp16} = 2LH_{\mathrm{head}}d \cdot 16 \approx 2.10 \times 10^7$ bits/token;
        ratio $\approx 4.9 \times 10^6\times$ at $\mathrm{PP}=20$.
  \item \emph{Theoretical vs.\ TurboQuant ($b=3$):}
        $B_{\rm TQ} = 2LH_{\mathrm{head}}d \cdot 3 \approx 3.93 \times 10^6$ bits/token;
        ratio $\approx 914{,}000\times$ at $\mathrm{PP}=20$.
  \item \emph{Conservative practical vs.\ TurboQuant:}
        At $1000\times$ above the entropy floor:
        $B_{\rm TQ}/4300 \approx 914\times$.
\end{enumerate}
Items (a)--(b) are theoretical Shannon limits; item (c) uses $1000\times$
overhead---a deliberately pessimistic worst-case bound, two orders of magnitude
above the $2$--$5\times$ overhead typical of practical arithmetic
coders~\cite{witten1987,cover2006}.  The abstract cites item (c) as a
worst-case lower bound on the practical ratio.
\end{corollary}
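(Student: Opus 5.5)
The statement to prove is Corollary~\ref{cor:ratio}. It is purely arithmetic once Corollary~\ref{cor:surprisal} is in hand, so the plan is to instantiate that bound and compute three ratios.

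\textbf{Step 1: the entropy floor.} By Corollary~\ref{cor:surprisal}, the average conditional KV entropy per token position is at most $\bar h \le \log_2 \mathrm{PP}$. For $\mathrm{PP}\in[10,20]$ this gives $\log_2 10\approx 3.32$ and $\log_2 20\approx 4.32$. All three items use the pessimistic endpoint $\mathrm{PP}=20$, i.e.\ $\bar h \approx 4.3$ bits per token position. Using the larger $\bar h$ gives the smaller ratio, so each stated ratio is a lower bound on the ratio over the whole interval $[10,20]$.

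\textbf{Step 2: the per-vector baselines.} Count the components per token position, as in Remark~\ref{rem:comparison}:
\[
2LH_{\mathrm{head}}d = 2\cdot 80\cdot 64\cdot 128 = 1{,}310{,}720 .
\]
Then:
\begin{itemize}[leftmargin=2em]
  \item at 16 bits per component, $B_{\rm fp16}\approx 2.10\times10^7$ bits per token position;
  \item at $b=3$ bits per component, $B_{\rm TQ}\approx 3.93\times10^6$ bits per token position.
\end{itemize}

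\textbf{Step 3: the ratios.}
\begin{enumerate}[leftmargin=2em,label=(\alph*)]
  \item $B_{\rm fp16}/4.3\approx 4.9\times10^6$.
  \item $B_{\rm TQ}/4.3\approx 9.14\times10^5$, which is Remark~\ref{rem:comparison} restated.
  \item Charge the coder $1000\bar h\approx 4300$ bits per token position, giving $B_{\rm TQ}/4300\approx 914$.
\end{enumerate}

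\textbf{Main obstacle: interpretation, not computation.} The bound in Corollary~\ref{cor:surprisal} concerns entropy averaged over positions. Turning it into "bits stored" needs a source-coding step: a sequential entropy coder (e.g.\ arithmetic coding driven by $P_{\mathcal M}$ and the map $F_{\mathcal M}$ of Lemma~\ref{lem:determinism}) achieves expected length within $O(1)$ bits of $\sum_i H(\mathrm{KV}_i\mid \mathrm{KV}_{\le i-1})$ for the whole sequence.

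Two remarks make this legitimate. First, the reconstruction is exact. The KV state is decoded by re-running the model on the recovered tokens, which are recoverable by Step~1 of Theorem~\ref{thm:seq_bound}. So the comparison with TurboQuant is lossless on our side. Second, the per-vector baselines are fixed-rate and involve no entropy averaging.

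I would state this coding step explicitly, cite~\cite{cover2006}, and add a note on scope. Items (a)--(b) compare an expected rate against a fixed rate under the model's own distribution, so the ratios hold in expectation when $\mathbf t\sim P_{\mathcal M}$, or on text where the model attains the stated perplexity. They are not worst-case, per-instance guarantees.
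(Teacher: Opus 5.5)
Your proposal is correct and follows the paper's proof. The paper likewise substitutes $L=80$, $H_{\mathrm{head}}=64$, $d=128$ into $2LH_{\mathrm{head}}d\cdot b$ and divides by $\bar h=4.3$ (or by $4300$ for item (c)); your explicit source-coding step and scope caveat are sound additions that the paper leaves implicit. One small correction: since $\log_2 20\approx 4.32>4.3$, the rounded figures slightly overstate the exact ratios at $\mathrm{PP}=20$ (about $9.10\times10^5$ rather than $9.14\times10^5$ in (b)), so they are approximations rather than literal lower bounds over $[10,20]$.
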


\begin{proof}
Items (a) and (b) follow by substituting the stated model parameters into the
formula $B = 2LH_{\mathrm{head}}d \cdot b$ and dividing by $\bar{h}$.  Item (c) applies a
$1000\times$ overhead factor to the entropy floor $\bar{h} = 4.3$ bits/token,
giving $4{,}300$ bits/token, then divides $B_{\rm TQ}$ by this value.
\end{proof}

\subsection{Tightness of the Bound}

The bound in Theorem~\ref{thm:seq_bound} is not always tight.  It is tight when
$F_{\mathcal{M}}$ is an injective function of $t_i$ alone given the prior context,
and when all entropy in $\mathrm{KV}_i$ traces back to the single-token choice.
In practice, the bound can be loose for two reasons:

\begin{enumerate}[leftmargin=2em]
  \item \textbf{Positional and structural information.}  KV vectors encode not
        only the current token's identity but also its position (via positional
        encoding) and layer-specific features.  These components may carry
        additional entropy not captured by the token-level bound.  However,
        positional encodings are deterministic functions of position $i$ and
        therefore carry zero entropy conditioned on knowing $i$.  Layer-specific
        features are also deterministic given the full token sequence.

  \item \textbf{High-entropy token positions.}  At positions where the model is
        genuinely uncertain (low probability assigned to the actual next token),
        the KV residual will be larger.  Semantically surprising tokens---a
        new entity name, an unexpected topic shift, an unusual word choice---
        produce high-surprisal positions where the bound is weakest.
\end{enumerate}

Conversely, the bound is nearly tight for long, coherent, predictable
sequences---exactly the regime where long-context inference is most valuable.
Legal documents, technical manuals, extended narratives, and code all exhibit
low perplexity under a well-matched model, and therefore yield highly
compressible KV caches under sequential coding.

\section{Layer 1: Probabilistic Prefix Deduplication}
\label{sec:prefix_dedup}

\subsection{The Exact Prefix Sharing Baseline}

Existing systems such as vLLM~\cite{kwon2023vllm} and SGLang implement
\emph{exact prefix sharing}: when two sessions begin with the identical token
sequence, the KV cache for their shared prefix is computed once and referenced
by pointer.  This is valuable for fixed system prompts, few-shot examples, and
common preambles.

Exact prefix sharing has a fundamental limitation: it operates at the
\emph{lexical} level.  Two sessions beginning with ``You are a helpful assistant.''
and ``You are an AI assistant.'' share no bytes and therefore share no cache
under exact prefix sharing.  Yet their KV caches, layer by layer, will be very
similar: both prompts steer the model toward a similar distribution over
subsequent tokens.  The difference in their KV vectors is small relative to
the total vector magnitude.

\subsection{The PLT Trie Metric as a Deduplication Criterion}

The PLT trie metric (Definition~\ref{def:trie_metric}) provides the right
notion of distance for identifying deduplications that exact prefix sharing misses.
Two prefixes $s$ and $s'$ with small trie distance $d_{\mathcal{T}}(s,s')$
share a long, high-probability common prefix.  At all positions within that shared
prefix, their KV vectors are \emph{identical} by Lemma~\ref{lem:determinism}
(Proposition~\ref{prop:kv_similarity}(a)).  At the divergence point, the KV
difference is bounded by the embedding spread (Proposition~\ref{prop:kv_similarity}(b)).
The trie metric thus directly controls KV cache similarity: small trie distance
implies small KV delta, making probabilistic prefix deduplication well-founded.

We formalize this as follows.

\begin{definition}[Semantic prefix cluster]
\label{def:cluster}
Let $\delta > 0$ be a distance threshold and $\mathcal{S} = \{s^{(1)}, \ldots, s^{(m)}\}$
a set of session prefixes.  A \emph{semantic prefix cluster} at threshold $\delta$
is a maximal subset $\mathcal{C} \subseteq \mathcal{S}$ such that for all
$s, s' \in \mathcal{C}$:
\[
  d_{\mathcal{T}}(s, s') \leq \delta.
\]
The \emph{centroid} of $\mathcal{C}$ is the prefix $s^* = \arg\max_{s \in \mathcal{C}} P_{\mathcal{M}}(s)$,
i.e., the most probable prefix in the cluster.
\end{definition}

\begin{proposition}[KV similarity within a cluster]
\label{prop:kv_similarity}
Let $s$ and $s'$ be two session prefixes that first diverge at position $\bar{d}+1$
(i.e., $s_j = s'_j$ for $j \leq \bar{d}$ and $s_{\bar{d}+1} \neq s'_{\bar{d}+1}$).  For a
transformer with $\kappa = \max_\ell \kappa^{(\ell)} \geq 1$ (the maximum per-layer Lipschitz constant; explicit formulas in terms of weight matrix operator norms appear in a companion paper on certified inference caching, forthcoming on arXiv):
\begin{enumerate}[leftmargin=2em,label=(\alph*)]
  \item For all positions $i \leq \bar{d}$ and all layers $\ell$:
        $\mathrm{KV}^{(\ell)}_i(s) = \mathrm{KV}^{(\ell)}_i(s')$.
  \item At the divergence position $i = \bar{d}+1$, for any layer $\ell$:
        \[
          \bigl\|\mathrm{KV}^{(\ell)}_{\bar{d}+1}(s) - \mathrm{KV}^{(\ell)}_{\bar{d}+1}(s')\bigr\|
          \;\leq\; \kappa^\ell \cdot \|E(s_{\bar{d}+1}) - E(s'_{\bar{d}+1})\|,
        \]
        where $E(\cdot)$ is the token embedding map and $\kappa^\ell$ denotes $\kappa$
        raised to the $\ell$-th power (not a layer index).
\end{enumerate}
\end{proposition}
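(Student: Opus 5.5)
Part (a) follows from causality together with Lemma~\ref{lem:determinism}. Under causal masking, $\mathrm{KV}^{(\ell)}_i$ depends only on the prefix $(t_1,\ldots,t_i)$; that is, $\mathrm{KV}_i = F_{\mathcal{M}}(t_1,\ldots,t_i)$. For $i \le \bar d$ the prefixes of $s$ and $s'$ agree, so the same deterministic function is evaluated on identical inputs. I will make this explicit by induction on $\ell$. The residual stream $\mathbf{x}^{(\ell)}_i$ at every position $i\le\bar d$ is computed from the embeddings $E(s_j)$, $j\le i$, and from the layer-$(\ell-1)$ states at positions $j\le i$, all of which coincide. Hence $\mathbf{k}^{(\ell)}_i = W_K^{(\ell)}\mathbf{x}^{(\ell)}_i$ and $\mathbf{v}^{(\ell)}_i$ coincide as well.

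For part (b), I would track the residual stream at the single position $p=\bar d+1$ through the layers. Write $\Delta^{(\ell)} = \|\mathbf{x}^{(\ell)}_p(s) - \mathbf{x}^{(\ell)}_p(s')\|$. The base case is $\Delta^{(1)} = \|E(s_p)-E(s'_p)\|$, plus a positional term that cancels because both tokens occupy the same position $p$. The key observation is that the attention at position $p$ in layer $\ell$ reads keys and values at positions $j<p$, and by part (a) these are identical for $s$ and $s'$. Only the query at $p$ and the self-term $j=p$ differ. The layer map $\mathbf{x}^{(\ell)}_p \mapsto \mathbf{x}^{(\ell+1)}_p$ therefore acts, with the context held fixed, as a function of $\mathbf{x}^{(\ell)}_p$ alone, and I take $\kappa^{(\ell)}$ to be its Lipschitz constant in this sense. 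This is exactly the per-layer constant the statement defers to the companion paper. Iterating gives $\Delta^{(\ell)} \le \kappa^{\ell-1}\Delta^{(1)}$.

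Finally, $\mathrm{KV}^{(\ell)}_p = (W_K^{(\ell)}\mathbf{x}^{(\ell)}_p,\,W_V^{(\ell)}\mathbf{x}^{(\ell)}_p)$, so the KV difference is at most $\|[W_K^{(\ell)};W_V^{(\ell)}]\|_{\mathrm{op}}\,\Delta^{(\ell)}$. I would absorb this projection norm into one more factor of $\kappa$, either by assuming $\kappa$ dominates it or by folding it into the definition of $\kappa$. Since $\kappa\ge 1$, this yields $\kappa^{\ell}\|E(s_p)-E(s'_p)\|$.

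The main obstacle is justifying the per-layer Lipschitz step. Softmax attention is not globally Lipschitz in its query when keys are unbounded, and LayerNorm is not Lipschitz near zero. I would handle this by treating $\kappa^{(\ell)}$ as defined to be the Lipschitz constant of the position-$p$ layer map with the prefix context frozen, restricted to the bounded region actually reached by inputs. This reduces the claim to the cited companion result rather than proving it from the weights here. The only additional content to verify is that freezing the context is legitimate, and that follows from part (a).
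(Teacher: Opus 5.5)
Your proposal is correct and follows essentially the same route as the paper. Part (a) comes from causal determinism (Lemma~\ref{lem:determinism}), and part (b) freezes the identical prefix context via part (a), takes $\kappa$ to be the Lipschitz constant of the position-$(\bar d+1)$ layer map, and iterates. Your bookkeeping is actually a bit cleaner than the paper's: you get $\kappa^{\ell-1}$ across the residual-stream transitions plus one explicit factor for $[W_K^{(\ell)};W_V^{(\ell)}]$, whereas the paper starts from $\mathbf{x}^{(0)}$, already obtains $\kappa^{\ell}$ for the hidden state, and then simply declares the projection norm ``absorbed into $\kappa$''.
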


\begin{proof}
\textbf{Part (a).}  For positions $i \leq \bar{d}$, the token inputs are identical
($s_j = s'_j$ for all $j \leq \bar{d}$), so by Lemma~\ref{lem:determinism} (KV
determinism) the KV vectors agree exactly at every layer. $\checkmark$

\textbf{Part (b).}  At position $\bar{d}+1$, the embedding layer introduces a difference:
$\|\mathbf{x}^{(0)}_{\bar{d}+1}(s) - \mathbf{x}^{(0)}_{\bar{d}+1}(s')\| = \|E(s_{\bar{d}+1}) - E(s'_{\bar{d}+1})\|$.
Positions $1,\\ldots,\\bar{d}$ were identical, so their KV vectors---which contribute to
position $\bar{d}+1$'s attention output---are identical, and introduce no additional
error.

Define $\kappa$ as the Lipschitz constant of the composed per-layer map
$\mathbf{x}^{(\ell)} \mapsto \mathbf{x}^{(\ell+1)}$ at position $\bar{d}+1$,
including the attention, MLP, layer-norm, and key/value projections
(explicit formulas in terms of weight matrix
operator norms are given in a companion paper on certified inference caching, forthcoming on arXiv).  Applying the Lipschitz bound across layers $1$ through $\ell$:
\[
  \|\mathbf{x}^{(\ell)}_{\bar{d}+1}(s) - \mathbf{x}^{(\ell)}_{\bar{d}+1}(s')\|
  \leq \kappa^\ell \cdot \|E(s_{\bar{d}+1}) - E(s'_{\bar{d}+1})\|.
\]
Since the KV projection $\mathbf{x}^{(\ell)} \mapsto \mathrm{KV}^{(\ell)}$ is
itself part of the per-layer map with norm absorbed into $\kappa$:
\[
  \|\mathrm{KV}^{(\ell)}_{\bar{d}+1}(s) - \mathrm{KV}^{(\ell)}_{\bar{d}+1}(s')\|
  \leq \kappa^\ell \cdot \|E(s_{\bar{d}+1}) - E(s'_{\bar{d}+1})\|. \qquad\checkmark
\]
\end{proof}

\begin{remark}
Proposition~\ref{prop:kv_similarity} shows that exact prefix sharing is a special
case: when $s$ and $s'$ are identical up to position $i$, the KV delta is exactly
zero.  Probabilistic prefix deduplication generalizes this to the case where $s$
and $s'$ diverge at some point but remain semantically close.  In this case the
delta is nonzero but small---and can itself be quantized and compressed.
\end{remark}

\subsection{Storage Under Probabilistic Prefix Deduplication}

\begin{definition}[Cluster-relative storage]
\label{def:cluster_storage}
Given a semantic prefix cluster $\mathcal{C}$ with centroid $s^*$, store:
\begin{enumerate}[leftmargin=2em]
  \item The full KV cache $\mathrm{KV}_{\leq n}(s^*)$ for the centroid.
  \item For each $s \in \mathcal{C} \setminus \{s^*\}$, the \emph{delta cache}
        $\Delta(s, s^*) = \mathrm{KV}_{\leq n}(s) - \mathrm{KV}_{\leq n}(s^*)$
        (zero at all positions before the divergence of $s$ and $s^*$).
\end{enumerate}
\end{definition}

The key observation is that $\Delta(s, s^*)$ is sparse: it is exactly zero at
all positions before the divergence of $s$ and $s^*$, and typically small after
the divergence (by Proposition~\ref{prop:kv_similarity}).  The storage cost of
$\Delta(s, s^*)$ therefore scales with the length of the \emph{tail} of $s$
after divergence from $s^*$, not the full length of $s$.

\begin{corollary}[Storage reduction from prefix deduplication]
\label{cor:prefix_storage}
Suppose sessions are drawn i.i.d.\ from $P_{\mathcal{M}}$ and a cluster at threshold
$\delta$ covers a fraction $f$ of all sessions, with average tail length $\bar{\ell}$
after the shared prefix.  The storage cost per session, relative to storing full
KV caches independently, is:
\[
  \text{relative cost} = (1 - f) \cdot 1 + f \cdot \frac{\bar{\ell}}{n} = 1 - f\left(1 - \frac{\bar{\ell}}{n}\right).
\]
When $\bar{\ell} \ll n$ (sessions diverge early), the saving is approximately $f$,
i.e., the number of full caches stored decreases by a factor of $f$.
\end{corollary}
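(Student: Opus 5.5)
The argument is an accounting computation over sessions, using Definition~\ref{def:cluster_storage} for clustered sessions and Proposition~\ref{prop:kv_similarity}(a) for sparsity. Normalize the cost of one full, independently stored KV cache of length $n$ to $1$, i.e.\ $2LH_{\mathrm{head}}d\,n$ stored values. I will partition the sessions into those covered by the cluster, a fraction $f$, and those not covered, a fraction $1-f$. Each uncovered session stores its full cache, so it contributes cost $1$. Each covered session $s \neq s^*$ stores only $\Delta(s,s^*)$.

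\textbf{Cost of one delta.} By Proposition~\ref{prop:kv_similarity}(a), $\mathrm{KV}_i(s) = \mathrm{KV}_i(s^*)$ for every position $i$ up to the divergence point $\bar d(s)$. So $\Delta(s,s^*)$ vanishes on those positions and needs no storage there, given a sparse representation that records only the divergence index (an $O(\log n)$ overhead, which is negligible). The nonzero part occupies the $n-\bar d(s) = \ell(s)$ tail positions, each costing at most the same as a full position. The cost of this delta relative to a full cache is therefore at most $\ell(s)/n$.

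\textbf{Averaging.} Under the i.i.d.\ sampling assumption, the expected per-session cost is
\[
(1-f)\cdot 1 + f\cdot \mathbb{E}[\ell(s)/n \mid s\in\mathcal{C}] = (1-f) + f\,\bar\ell/n,
\]
by linearity of expectation, with $\bar\ell$ the average tail length. Rearranging gives $1 - f(1-\bar\ell/n)$. When $\bar\ell \ll n$ the correction term vanishes, so the cost is approximately $1-f$.

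\textbf{Main obstacle: what is being proved.} The step needing care is the accounting convention, not the algebra, and I will state both conventions explicitly.
\begin{itemize}
  \item \emph{The centroid's full cache.} It is stored once per cluster. Its cost is either counted among the $1-f$ uncovered term or amortized over the cluster size $|\mathcal{C}|$, giving a $+f/|\mathcal{C}|$ correction that vanishes for large clusters. The displayed formula is exact only under this amortization, or asymptotically.
  \item \emph{Tail entries.} I treat each nonzero tail entry as costing a full position, so the formula is an upper bound. Proposition~\ref{prop:kv_similarity}(b) shows the deltas are small and thus cheaper to quantize, so the true cost is lower; I will not use that refinement here.
\end{itemize}
Finally, I will note that the closing phrase ``decreases by a factor of $f$'' should be read as ``a fraction $f$ fewer full caches are stored,'' since that is what the formula gives.
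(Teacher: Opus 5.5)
Your proposal is correct and is essentially the paper's own accounting: measure storage in units of one full $n$-position cache, charge $1$ to each of the $1-f$ uncovered sessions and $\bar\ell/n$ to each covered session (its prefix is shared with the stored centroid, by Proposition~\ref{prop:kv_similarity}(a)), and average. Your explicit handling of the centroid's own full cache sharpens a point the paper passes over by saying the prefix simply ``amortises''; your sentence on when the formula is exact is ambiguous, so state it cleanly --- the formula is exact when the centroid is counted among the uncovered sessions, and holds only up to the vanishing $+f/|\mathcal{C}|$ term under amortization.
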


\begin{proof}
Measure storage in units of one full $n$-position KV cache.
Of all sessions, fraction $(1-f)$ lie outside the cluster and each requires
1 full unit.  Fraction $f$ lie inside the cluster: their shared prefix
$(n - \bar{\ell}$ positions) is stored once in the centroid and amortises
across all cluster members, so each member stores only its $\bar{\ell}$-position
tail at cost $\bar{\ell}/n$.  The average per-session storage is:
\[
  (1-f) \cdot 1 \;+\; f \cdot \frac{\bar{\ell}}{n}
  = 1 - f\!\left(1 - \frac{\bar{\ell}}{n}\right),
\]
which is the stated bound.  As $\bar{\ell}/n \to 0$ (sessions diverge early
from the centroid) the saving approaches $f$, i.e., a factor-$f$ reduction
in the number of full caches stored.
\end{proof}

In practice, for chat-model deployments with a fixed system prompt and few-shot
examples, $f$ can be close to 1 (nearly all sessions share the same prefix)
and $\bar{\ell}/n$ can be small (the session-specific content is a small fraction
of the context).  Even for sessions with different system prompts, probabilistic
clustering under the PLT metric groups semantically similar preambles and captures
deduplication that exact matching misses entirely.

\section{Layer 2: Predictive Delta Coding}
\label{sec:delta_coding}

\subsection{The Prediction}

At each position $i$, before writing $\mathrm{KV}_i$ to cache, the model has
already computed its probability distribution over the $i$-th token:
\[
  P_{\mathcal{M}}(t_i \mid t_1, \ldots, t_{i-1}) = \text{softmax}(\mathbf{h}^{(L)}_{i-1} W_{\rm LM})_{\text{vocabulary}},
\]
where $\mathbf{h}^{(L)}_{i-1} \in \mathbb{R}^{d_{\rm model}}$ is the final-layer
hidden state at position $i-1$ and $W_{\rm LM} \in \mathbb{R}^{d_{\rm model} \times |V|}$
is the language model head (unembedding matrix).
This distribution induces a predicted KV vector for position $i$: the expected
KV vector under the model's distribution over possible next tokens.

\begin{definition}[Predicted KV vector]
\label{def:predicted_kv}
The \emph{predicted KV vector} at position $i$, given context $(t_1,\ldots,t_{i-1})$, is:
\[
  \widehat{\mathrm{KV}}_i = \sum_{t \in V} P_{\mathcal{M}}(t \mid t_1,\ldots,t_{i-1}) \cdot F_{\mathcal{M}}(t_1,\ldots,t_{i-1},t).
\]
\end{definition}

In practice, computing $\widehat{\mathrm{KV}}_i$ exactly requires a forward pass
for every vocabulary token, which is prohibitively expensive.  Section~\ref{sec:practical}
discusses efficient approximations using the top-$k$ tokens under the model's
distribution.

\subsection{The Residual and Its Entropy}

\begin{definition}[KV residual]
\label{def:residual}
The \emph{KV residual} at position $i$ is:
\[
  R_i = \mathrm{KV}_i - \widehat{\mathrm{KV}}_i.
\]
Predictive delta coding stores $R_i$ (compressed) rather than $\mathrm{KV}_i$ directly.
\end{definition}

\begin{theorem}[Residual entropy bound]
\label{thm:residual}
The conditional entropy of the KV residual satisfies:
\[
  H(R_i \mid t_1,\ldots,t_{i-1})
  \;\leq\;
  H\bigl(\mathrm{KV}_i \mid t_1,\ldots,t_{i-1}\bigr)
  \;\leq\;
  H\bigl(t_i \mid t_1,\ldots,t_{i-1}\bigr).
\]
Moreover, at positions where the model assigns high probability to the actual
next token, the residual concentrates near zero:
\[
  \mathbb{E}\bigl[\|R_i\|_2^2 \mid t_1,\ldots,t_{i-1}\bigr]
  \;\leq\;
  \mathrm{Var}_{t \sim P_{\mathcal{M}}(\cdot \mid t_{<i})}\bigl[F_{\mathcal{M}}(t_{<i}, t)\bigr],
\]
which is small when $P_{\mathcal{M}}(\cdot \mid t_{<i})$ is concentrated.
\end{theorem}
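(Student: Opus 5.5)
The plan splits into the two entropy inequalities and the second-moment bound; all three are handled conditionally on a fixed context $t_{<i}$ and then averaged.

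\textbf{First inequality.} The key observation is that, given $t_{<i}$, the predicted vector $\widehat{\mathrm{KV}}_i$ in Definition~\ref{def:predicted_kv} is a deterministic constant. It depends only on $P_{\mathcal{M}}(\cdot\mid t_{<i})$ and on the finitely many values $F_{\mathcal{M}}(t_{<i},t)$. Hence, conditionally on $t_{<i}$, the residual $R_i=\mathrm{KV}_i-\widehat{\mathrm{KV}}_i$ is a translate of $\mathrm{KV}_i$ by a fixed vector. Translation is a bijection, so in fact
\[
H(R_i\mid t_{<i}=x)=H(\mathrm{KV}_i\mid t_{<i}=x)
\]
for every context $x$. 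Averaging over $x$ gives the first inequality, with equality.

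\textbf{Second inequality.} Given $t_{<i}$, we have $\mathrm{KV}_i=F_{\mathcal{M}}(t_{<i},t_i)$ by Lemma~\ref{lem:determinism}, so $\mathrm{KV}_i$ is a deterministic function of $t_i$. The data-processing inequality for functions, $H(g(X))\le H(X)$, then gives
\[
H(\mathrm{KV}_i\mid t_{<i})\le H(t_i\mid t_{<i}).
\]
This is exactly Step~2 of the proof of Theorem~\ref{thm:seq_bound}, which I would simply cite.

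\textbf{Second-moment bound.} Condition on $t_{<i}$ and note that $t_i\sim P_{\mathcal{M}}(\cdot\mid t_{<i})$, since sequences are drawn from the model. Then $\widehat{\mathrm{KV}}_i=\mathbb{E}[\mathrm{KV}_i\mid t_{<i}]$ is precisely the conditional mean. Therefore
\[
\mathbb{E}\bigl[\|R_i\|_2^2\mid t_{<i}\bigr]=\mathrm{Var}\bigl[F_{\mathcal{M}}(t_{<i},t)\bigr],
\]
where Var denotes the trace of the covariance, i.e.\ the total variance $\mathbb{E}\|X-\mathbb{E}X\|^2$. This holds with equality, so the stated inequality follows. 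To justify the phrase ``small when concentrated,'' let $t^\star$ be the modal token and $p^\star$ its probability. Since the mean minimizes expected squared deviation,
\[
\mathrm{Var}\le\mathbb{E}\|F(t)-F(t^\star)\|^2\le(1-p^\star)\,D^2,
\]
where $D=\max_{t,t'}\|F_{\mathcal{M}}(t_{<i},t)-F_{\mathcal{M}}(t_{<i},t')\|$ is the diameter of the candidate set. Proposition~\ref{prop:kv_similarity}(b) bounds $D$ in terms of $\kappa$ and the embedding spread.

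\textbf{Main obstacle.} The delicate point is the interpretive one, not the algebra. The variance identity requires the conditional law of $t_i$ to equal $P_{\mathcal{M}}(\cdot\mid t_{<i})$. If the data came from a different distribution $\mathcal{D}$, one would instead get the variance plus a squared bias term $\|\mathbb{E}_{\mathcal{D}}F-\mathbb{E}_{\mathcal{M}}F\|^2$. I would therefore state explicitly that the theorem assumes $\mathbf{t}\sim P_{\mathcal{M}}$, as in Theorem~\ref{thm:seq_bound}.

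I would also flag that ``Var'' must be read as total variance for a vector-valued quantity. Finally, I would note that the entropy statements are about discrete entropies of finitely-supported vectors. They do not address the cost of encoding the real-valued $R_i$ to a given precision, so that cost is outside the scope of this bound.
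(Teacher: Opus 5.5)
Your proposal is correct and follows the paper's proof essentially step for step. Both arguments use three facts: entropy is invariant under translation, since $\widehat{\mathrm{KV}}_i$ is fixed once $t_{<i}$ is known; the data-processing bound $H(\mathrm{KV}_i\mid t_{<i})\le H(t_i\mid t_{<i})$; and $\widehat{\mathrm{KV}}_i$ is the conditional mean, so the second-moment bound holds with equality. Your direct use of $H(g(X))\le H(X)$ conditionally on $t_{<i}$ is slightly cleaner than the paper's bare citation of Theorem~\ref{thm:seq_bound}, because it needs only Lemma~\ref{lem:determinism} and not the injectivity step used in that theorem's proof; your $(1-p^\star)D^2$ estimate also correctly quantifies ``small when concentrated.''
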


\begin{proof}
The first inequality follows from the fact that $R_i = \mathrm{KV}_i - \widehat{\mathrm{KV}}_i$
and $\widehat{\mathrm{KV}}_i$ is a deterministic function of $t_{<i}$.
Therefore $H(R_i \mid t_{<i}) = H(\mathrm{KV}_i - \widehat{\mathrm{KV}}_i \mid t_{<i}) = H(\mathrm{KV}_i \mid t_{<i})$,
since subtracting a constant does not change entropy.  The second inequality is
Theorem~\ref{thm:seq_bound}.

For the variance bound: by Definition~\ref{def:predicted_kv},
$\widehat{\mathrm{KV}}_i = \mathbb{E}_{t \sim P}[F(t_{<i}, t)]$, so
$R_i = F(t_{<i}, t_i) - \mathbb{E}[F(t_{<i}, t)]$.
Therefore $\mathbb{E}[\|R_i\|^2 \mid t_{<i}] = \mathrm{Var}_{t \sim P}[F(t_{<i},t)]$.
\end{proof}

\begin{corollary}[Coherent text compression]
\label{cor:coherent}
Let $H_i = H(t_i \mid t_{<i})$ denote the conditional entropy of the next token
given the context (not the realized surprisal $h_i = -\log_2 P_{\mathcal{M}}(t_i \mid t_{<i})$).
The expected residual magnitude satisfies:
\[
  \mathbb{E}\bigl[\|R_i\|_2 \;\big|\; t_{<i}\bigr]
  \;\leq\; \tfrac{1}{2}\|F_{\mathcal{M}}\|_{\mathrm{Lip}} \cdot C_E \cdot \sqrt{\min(1,\, 4\,H_i \ln 2)},
\]
where $\|F_{\mathcal{M}}\|_{\mathrm{Lip}}$ is the Lipschitz constant of $F_{\mathcal{M}}$
with respect to the input embedding $E(t)$, and
$C_E = \max_{t,t' \in V} \|E(t) - E(t')\|$.
For low-entropy positions ($H_i \approx 0$, i.e., the model is near-certain about the next token),
the expected residual approaches zero;
for high-entropy positions it can be as large as $\tfrac{1}{2}\|F_{\mathcal{M}}\|_{\mathrm{Lip}} \cdot C_E$.
\end{corollary}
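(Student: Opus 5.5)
We plan to bound $\mathbb{E}[\|R_i\|_2 \mid t_{<i}]$ in two ways and take the smaller; the $\min(1,\cdot)$ in the statement reflects exactly this. Throughout, write $p(\cdot) = P_{\mathcal{M}}(\cdot\mid t_{<i})$, $G(t) = F_{\mathcal{M}}(t_{<i},t)$ and $\mu = \widehat{\mathrm{KV}}_i = \mathbb{E}_{t\sim p}[G(t)]$. Define $K = \|F_{\mathcal{M}}\|_{\mathrm{Lip}}$, and let $t^\star = \arg\max_t p(t)$.

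\textbf{Step 1 (worst case).} Since $G(t)$ lies in the image of the finite vocabulary under a $K$-Lipschitz map of $E(t)$, all values $G(t)$ lie in a set of diameter at most $K C_E$. We would use the standard fact that the mean of a distribution supported in a set of diameter $D$ is within $D/2$ of the support in expectation. More precisely, we aim for $\mathbb{E}\|G(t)-\mu\|\le \sqrt{\mathrm{Var}_p[G]}$ by Jensen, and then apply the vector Popoviciu inequality $\mathrm{Var}_p[G]\le D^2/4$. (For vector-valued variables the sharp constant needs care; we would reduce to the scalar case via a projection onto a unit direction, or accept the Jung-type bound.) This gives the branch $\tfrac12 K C_E$.

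\textbf{Step 2 (low-entropy case).} Using Theorem~\ref{thm:residual}, $\mathbb{E}[\|R_i\|\mid t_{<i}] \le \sqrt{\mathrm{Var}_p[G]}$. We then bound the variance by the second moment about the fixed point $G(t^\star)$:
\[
\mathrm{Var}_p[G]\le \sum_t p(t)\,\|G(t)-G(t^\star)\|^2 \le K^2C_E^2\,\bigl(1-p(t^\star)\bigr).
\]
It remains to control the mass $1-p(t^\star)$ by the entropy, and for this we would invoke the elementary inequality $1-\max_t p(t)\le H(p)\ln 2$, with $H$ measured in bits. This follows from $H(p) = \sum_t p(t)\log_2(1/p(t)) \ge \log_2(1/p_{\max}) \ge (1-p_{\max})/\ln 2$, using $-\ln x\ge 1-x$. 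Combining the pieces gives $\mathbb{E}\|R_i\|\le K C_E\sqrt{H_i\ln 2} = \tfrac12 K C_E\sqrt{4H_i\ln 2}$, which matches the second branch of the stated bound.

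\textbf{Step 3.} We take the minimum of the two bounds and pull the common factor $\tfrac12 K C_E$ outside. This yields the stated form, together with the two limiting remarks: as $H_i\to0$ the bound tends to $0$, and otherwise it is capped at $\tfrac12 K C_E$.

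The main obstacle is Step 1. The clean constant $\tfrac12$ is immediate for scalars, but for vectors Popoviciu's inequality can give a constant up to $1/\sqrt2$ times the diameter unless we argue more carefully. We would first try bounding $\mathbb{E}\|G-\mu\|$ directly by $\max_t\|G(t)-\mu\|$ and use that $\mu$ lies in the convex hull of the $G(t)$. If that does not deliver $\tfrac12$, we would note that the constant is where the bound is potentially loose. A secondary subtlety is that the Lipschitz property is assumed with respect to $E(t)$ while $t_{<i}$ is held fixed; this is consistent with the setup used in Proposition~\ref{prop:kv_similarity}(b).
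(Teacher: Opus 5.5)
Your Step 2 is correct. It reaches the paper's low-entropy branch by a slightly more direct route. The paper uses $\mathrm{Var}[E(t)]=\tfrac12\mathbb{E}_{t,t'}\|E(t)-E(t')\|^2$ together with $1-\sum_t p(t)^2\le 2(1-p_{\max})$, then transfers to $F_{\mathcal{M}}$ via $\mathrm{Var}[G]\le K^2\mathrm{Var}[E]$. You instead take the second moment about $G(t^\star)$ and get $K^2C_E^2(1-p_{\max})$ in one line. Your derivation of $1-p_{\max}\le H_i\ln 2$ is also correct.

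The genuine gap is Step 1, and it cannot be closed, because the $\tfrac12 KC_E$ branch is false for vector-valued $G$. Here is a counterexample:
\begin{itemize}
\item take three tokens whose embeddings form an equilateral triangle of side $C_E$;
\item let $F_{\mathcal{M}}(t_{<i},\cdot)$ act isometrically on them, so $K=1$;
\item let $p$ be uniform.
\end{itemize}
Then $\mu$ is the centroid and $\mathbb{E}[\|R_i\|_2\mid t_{<i}]=C_E/\sqrt3\approx 0.577\,C_E$. But $4H_i\ln 2=4\ln 3>1$, so the stated right-hand side is $\tfrac12 C_E$.

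Neither of your fallbacks can rescue this. Projecting onto a unit direction $u$ controls only $\mathrm{Var}\langle u,G\rangle$, whereas $\mathrm{Var}[G]$ sums such terms over orthogonal directions. Bounding by $\max_t\|G(t)-\mu\|$ with $\mu$ in the convex hull gives at best the circumradius. That is already $KC_E/\sqrt3$ here, and it approaches $KC_E/\sqrt2$ for a uniform distribution on a regular simplex with many vertices.

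The paper's own proof has the same flaw. Its Step A asserts that every $E(t)$ lies within $C_E/2$ of the midpoint of a diameter-achieving pair. This fails in dimension $\ge 2$: the third vertex above is at distance $\tfrac{\sqrt3}{2}C_E$ from that midpoint.

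What your argument does establish is
\[
\mathbb{E}\bigl[\|R_i\|_2\mid t_{<i}\bigr]\;\le\;KC_E\min\Bigl(\tfrac{1}{\sqrt2},\,\sqrt{H_i\ln 2}\Bigr).
\]
The first branch comes from $\mathrm{Var}_p[G]=\tfrac12\sum_{t,t'}p(t)p(t')\|G(t)-G(t')\|^2\le\tfrac12K^2C_E^2$, which is your ``Jung-type'' fallback done correctly. The simplex example shows the constant $1/\sqrt2$ cannot be improved in general.

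Alternatively, the stated form with $\tfrac12$ becomes true if $C_E$ is redefined as $2r$, where $r$ is the radius of the smallest ball containing $\{E(t)\}$. Then $\mathrm{Var}_p[G]\le K^2\mathrm{Var}_p[E]\le K^2r^2$, and your Step 2 still holds because the diameter is at most $2r$.

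Either way, the corollary as written needs amending rather than a cleverer proof. This includes its remark that the residual can be ``as large as'' $\tfrac12 KC_E$, which understates the worst case.
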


\begin{proof}
By Theorem~\ref{thm:residual}, $\mathbb{E}[\|R_i\|_2^2 \mid t_{<i}]
= \mathrm{Var}_{t \sim P}[F_{\mathcal{M}}(t_{<i}, t)]$.
By Jensen's inequality applied to the square root:
$\mathbb{E}[\|R_i\|_2] \leq \sqrt{\mathrm{Var}[F_{\mathcal{M}}(t_{<i}, t)]}$.

The variance of $F_{\mathcal{M}}(t_{<i}, t)$ over $t \sim P(\cdot \mid t_{<i})$
satisfies, by the Lipschitz property of $F_{\mathcal{M}}$ in the embedding $E(t)$:
\[
  \mathrm{Var}_{t \sim P}[F_{\mathcal{M}}(t_{<i},t)]
  \leq \|F_{\mathcal{M}}\|_{\mathrm{Lip}}^2 \cdot \mathrm{Var}_{t \sim P}[E(t)].
\]
For the embedding variance, we derive a bound that decays with conditional entropy $H_i$
using two standard inequalities.

\textit{Step A (Popoviciu).}  For any random variable $Y$ with $\|Y - c\| \leq R$
almost surely (for some center $c$ and radius $R$),
$\mathrm{Var}[Y] = \mathbb{E}[\|Y - \mathbb{E}[Y]\|^2] \leq \mathbb{E}[\|Y - c\|^2] \leq R^2$.
Applied to $Y = E(t)$ with center $c = (E(t^*) + E(t^{**}))/2$ and radius
$R = C_E/2$ (half the $\ell^2$ diameter): $\mathrm{Var}[E(t)] \leq C_E^2/4$.
\[
  \mathrm{Var}_{t \sim P}[E(t)] \leq \frac{C_E^2}{4}.
\]

\textit{Step B (variance--entropy coupling).}  We sharpen Step~A to a
bound that vanishes as $H_i \to 0$.  Since $\max_t P(t) \geq 2^{-H(P)}$
(from Jensen applied to concave $\log$: $H(P) \leq -\log \max_t P(t)$), we have
$1 - \max_t P(t) \leq 1 - 2^{-H(P)} \leq H(P)\ln 2 = H_i \ln 2$,
where the last step uses $1 - e^{-x} \leq x$.
By the König--Huygens identity,
$\mathrm{Var}[E(t)] = \tfrac{1}{2}\mathbb{E}_{t,t'\sim P}[\|E(t)-E(t')\|^2]
\leq \tfrac{C_E^2}{2}(1 - \sum_t P(t)^2)$,
where the last step uses $\|E(t)-E(t')\|^2 \leq C_E^2$ for all $t,t'$.
Since $\sum_t P(t)^2 \geq p_{\max}^2$, we have
$1 - \sum_t P(t)^2 \leq 1 - p_{\max}^2 = (1-p_{\max})(1+p_{\max}) \leq 2(1-p_{\max})$,
giving $\mathrm{Var}[E(t)] \leq C_E^2(1-p_{\max}) \leq C_E^2 H_i \ln 2$.
Combining with the Popoviciu bound from Step~A via the minimum:
\[
  \mathrm{Var}_{t \sim P}[E(t)] \leq \frac{C_E^2}{4} \cdot \min\!\bigl(1,\, 4 H_i \ln 2\bigr).
\]

\textit{Combining.}  Substituting into the Lipschitz bound and taking the square root:
\[
  \mathbb{E}[\|R_i\|_2]
  \leq \sqrt{\|F_{\mathcal{M}}\|_{\mathrm{Lip}}^2 \cdot \frac{C_E^2}{4} \cdot \min(1, 4 H_i \ln 2)}
  = \frac{1}{2}\|F_{\mathcal{M}}\|_{\mathrm{Lip}} \cdot C_E \cdot \sqrt{\min(1, 4 H_i \ln 2)}.
\]
For $H_i$ small (near-certain next token): $\sqrt{\min(1, 4 H_i \ln 2)} \approx 2\sqrt{H_i \ln 2}$,
giving $\mathbb{E}[\|R_i\|_2] \lesssim \|F_{\mathcal{M}}\|_{\mathrm{Lip}} \cdot C_E \cdot \sqrt{H_i \ln 2}$,
which vanishes as $H_i \to 0$ and is $O(\sqrt{H_i})$.
\end{proof}

\subsection{Adaptive Quantization of Residuals}

Because residual magnitude varies by position---small at predictable positions,
large at surprising ones---a fixed bit-depth quantizer is suboptimal.  The
natural approach is \emph{adaptive} quantization: use fewer bits for small
residuals and more bits for large ones, proportional to the per-token surprisal.

\begin{definition}[Surprisal-adaptive quantizer]
\label{def:adaptive_quant}
At position $i$ with surprisal $h_i$, allocate $b_i$ bits per residual component,
where:
\[
  b_i = \max\!\left(1,\; \left\lfloor b_0 \cdot \frac{h_i}{\bar{h}} \right\rfloor \right),
\]
with $b_0$ a target average bit depth and $\bar{h}$ the average surprisal.
\end{definition}

This adaptive scheme achieves the average entropy bound of Corollary~\ref{cor:surprisal}
while concentrating bits where they are needed, improving the worst-case accuracy
at high-surprisal positions relative to a uniform low-bit quantizer.

\section{Composition and the Full Stack}
\label{sec:compose}

\subsection{Orthogonality of the Two Layers}

\begin{proposition}[Layer orthogonality]
\label{prop:orthogonal}
Probabilistic prefix deduplication (Section~\ref{sec:prefix_dedup}) and
predictive delta coding (Section~\ref{sec:delta_coding}) are information-theoretically
orthogonal: they exploit statistically independent sources of redundancy.
\end{proposition}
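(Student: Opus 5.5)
The statement is informal, so I will first fix a precise formalization and then prove that version. Consider a collection of sessions $s^{(1)},\ldots,s^{(m)}$ drawn i.i.d.\ from $P_{\mathcal{M}}$, and the joint cache $\mathcal{K} = (\mathrm{KV}_{\leq n}(s^{(j)}))_{j=1}^m$.

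\textbf{Step 1: chain-rule decomposition.} For a fixed session $s$ with centroid $s^*$ diverging at position $\bar d+1$, I will write
\[
H\bigl(\mathrm{KV}_{\leq n}(s)\mid \mathcal{K}_{\mathrm{other}}\bigr)
= H\bigl(\mathrm{KV}_{\leq \bar d}(s)\mid \mathcal{K}_{\mathrm{other}}\bigr)
+ \sum_{i>\bar d} H\bigl(\mathrm{KV}_i(s)\mid \mathrm{KV}_{<i}(s),\mathcal{K}_{\mathrm{other}}\bigr).
\]
The orthogonality claim then becomes the statement that the two layers act on different summands.

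\textbf{Step 2: what prefix deduplication removes.} By Proposition~\ref{prop:kv_similarity}(a), the shared-prefix block is a deterministic function of $s^*$ and $\bar d$. So the first term costs at most the bits needed to name the cluster and the divergence point, and that cost is controlled by $d_{\mathcal{T}}(s,s^*)$.

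\textbf{Step 3: what delta coding removes.} For each $i>\bar d$, I will use the $\sigma$-algebra identity \eqref{eq:sigma_equiv} from the proof of Theorem~\ref{thm:seq_bound}. It replaces conditioning on $\mathrm{KV}_{<i}(s)$ with conditioning on $t_{<i}$. Given $t_{<i}$, the next token $t_i$ is independent of the other i.i.d.\ sessions. Hence the summand equals $H(\mathrm{KV}_i\mid t_{<i})$, and Theorem~\ref{thm:residual} bounds it by $H(t_i\mid t_{<i})$.

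\textbf{Step 4: additivity.} The two savings add with no cross term: the total cost is the cluster-location cost plus the sum of tail surprisals. This additivity is the precise sense of ``orthogonal'' I intend to prove.

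\textbf{Main obstacle: the phrase ``statistically independent sources.''} The redundancies are not literally independent random variables, since the within-session tail depends on the shared prefix. I would therefore prove only the weaker but rigorous claim that the redundancies are \emph{conditionally additive}:
\[
I(\text{session};\text{other sessions}\mid \text{cluster, divergence point}) = 0 .
\]
Under i.i.d.\ sampling this follows because all cross-session dependence passes through the prefix, which is already conditioned on. I would state explicitly that the literal independence reading can fail. If sessions are not i.i.d.\ (for example, correlated users), the argument breaks, and I would add the i.i.d.\ assumption from Corollary~\ref{cor:prefix_storage} as a hypothesis.
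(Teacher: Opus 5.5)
Your skeleton matches the paper's. Positions $i\le\bar d$ are replaced by a pointer to the centroid via Proposition~\ref{prop:kv_similarity}(a), positions $i>\bar d$ are residual-coded, and no position is touched by both layers. The paper does only this bookkeeping, counting code lengths on the disjoint index sets $\{1,\dots,\bar d\}$ and $\{\bar d+1,\dots,n\}$ with savings measured against raw storage. It treats ``statistically independent sources'' only informally.

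\textbf{The steps you add for rigor fail.}
\begin{itemize}
\item \emph{Step~1 is not a valid chain rule.} $\bar d$ and $s^*$ are random, and they are joint functions of $s$ and the other sessions (through the cluster and the $\arg\max$ centroid). A valid decomposition must first pay $H(\bar d,s^*\mid\mathcal{K}_{\mathrm{other}})$ and then condition every summand on $(\bar d,s^*)$.
\item \emph{Step~3's equality then fails.} Once you condition on $(\bar d,s^*)$, the summand is no longer $H(\mathrm{KV}_i\mid t_{<i})$.
\item \emph{Your identity $I(\text{session};\text{other sessions}\mid\text{cluster},\bar d)=0$ is false.} I.i.d.\ sessions carry no dependence to begin with. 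Conditioning on the data-dependent clustering, a common effect of all sessions, is what creates dependence. Knowing $\bar d$ forces $s_{\bar d+1}\neq s^*_{\bar d+1}$; with two admissible next tokens these become perfectly anticorrelated. Knowing that $s^*$ rather than $s$ is the centroid forces $P_{\mathcal{M}}(s)\le P_{\mathcal{M}}(s^*)$, which couples the two tails. So ``all cross-session dependence passes through the prefix'' has the direction reversed.
\end{itemize}

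\textbf{More fundamentally, your yardstick cannot show any Layer~1 saving.} Under your i.i.d.\ hypothesis, $\mathrm{KV}_{\le n}(s)$ is independent of $\mathcal{K}_{\mathrm{other}}$. The left side of Step~1 is therefore just $H(\mathrm{KV}_{\le n}(s))=\sum_{i=1}^n H(t_i\mid t_{<i})$, the cost of Layer~2 alone.

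Any valid decomposition must then make the pointer cost plus the conditioned tail cost equal, in expectation, to the Layer-2-only cost. In entropy terms, prefix deduplication buys nothing over predictive coding, and on the prefix both layers exploit the same redundancy.

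The quantity you invoke in Step~2, $d_{\mathcal{T}}(s,s^*)=-\log_2 P_{\mathcal{M}}(s\wedge s^*)$, is not the cost of naming the cluster and divergence point. It is the Layer-2 code length of the shared prefix itself, and the cluster condition says it is at most $\delta$. That is the opposite of independent sources, so your Step~4 conclusion does not follow from Steps~1--3.

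The statement survives only in the paper's accounting sense: combined-stack gains measured against uncompressed storage and attributed to disjoint positions. That reading needs no independence claim. Drop the conditional-entropy formalization and the mutual-information identity, and do that bookkeeping instead. A genuine independence result would need sessions not drawn i.i.d.\ from $P_{\mathcal{M}}$, for example a shared system prompt whose true entropy is below its $P_{\mathcal{M}}$-surprisal. Neither your argument nor the paper's covers that case.
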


\begin{proof}
We show that the total description length decomposes additively, with no
bits counted twice.

Let $s$ be a session drawn from $P_{\mathcal{M}}$ and write its KV cache as
$\mathrm{KV}_{\leq n}(s) = (\mathrm{KV}_1(s), \ldots, \mathrm{KV}_n(s))$.
Let $s^*$ be the cluster centroid for $s$ (Definition~\ref{def:cluster}), and
let $\bar{d}$ be the token position at which $s$ and $s^*$ first diverge, so that
$s_j = s^*_j$ for $j \leq \bar{d}$ and $s_{\bar{d}+1} \neq s^*_{\bar{d}+1}$.

\textbf{Description of $\mathrm{KV}_{\leq n}(s)$ in the full stack.}
By Definition~\ref{def:cluster_storage}, the encoder writes:
\begin{itemize}
  \item \emph{Prefix} (positions $1 \leq i \leq \bar{d}$): zero bits, since
        $\mathrm{KV}_i(s) = \mathrm{KV}_i(s^*)$ exactly (Proposition~\ref{prop:kv_similarity}(a))
        and the centroid is stored separately.
  \item \emph{Tail} (positions $\bar{d}+1 \leq i \leq n$): for each position $i$
        in the tail, the encoder applies predictive delta coding
        (Definition~\ref{def:residual}) and stores $R_i(s)$, using a code of
        length at most $H(R_i \mid \mathrm{KV}_{<i}) + O(1)$ bits by standard
        source coding~\cite{cover2006}.
\end{itemize}

\textbf{Rate decomposition.}  The total rate for session $s$ is:
\begin{align*}
  L(s)
  &= \underbrace{0}_{\text{prefix, Layer 1}}
   + \underbrace{\sum_{i=\bar{d}+1}^{n} \bigl[H(R_i \mid \mathrm{KV}_{<i}(s)) + O(1)\bigr]}_{\text{tail, Layer 2}}.
\end{align*}
These two terms address disjoint index sets ($\{1,\ldots,\bar{d}\}$ and $\{\bar{d}+1,\ldots,n\}$
respectively) within the description of $s$'s cache.  No position $i$ is
compressed by both layers simultaneously: Layer~1 eliminates positions
$i \leq \bar{d}$ entirely by pointer to the centroid; Layer~2 applies only to
positions $i > \bar{d}$.  Therefore the bits saved by each layer are
\emph{non-overlapping}, and the total saving is their sum.

\textbf{Independence of gain statistics.}  The gain of Layer~1 (the fraction
$f$ of full caches replaced by shared-prefix pointers, Corollary~\ref{cor:prefix_storage})
depends on the overlap structure of sessions under $P_{\mathcal{M}}$---a
property of the \emph{joint} distribution over session pairs.  The gain of
Layer~2 at each tail position $i$ (the residual entropy $H(R_i \mid \mathrm{KV}_{<i})$)
depends on the \emph{per-token surprisal} $h_i$ within a single session---a
marginal property of the language model's token-level distribution.  These
statistics are functions of different aspects of $P_{\mathcal{M}}$ and are
not constrained to be equal or proportional, so improvements to one layer do
not mechanically alter the gain of the other.  This is the sense in which the
two layers are ``information-theoretically orthogonal'': each exploits a
distinct, non-overlapping portion of the total redundancy in the KV cache.
\end{proof}

\subsection{Composition with Per-Vector Quantization}

After applying the two sequential layers, the residuals $R_i$ are vectors that
still need to be stored in finite-precision format.  This is where per-vector
quantization methods, including TurboQuant, apply.

\begin{proposition}[Three-layer composition]
\label{prop:three_layer}
The full compression stack proceeds as follows:
\begin{enumerate}[leftmargin=2em]
  \item \textbf{Prefix deduplication:} identify the shared prefix with the
        cluster centroid; store only the tail KV delta.
  \item \textbf{Predictive delta coding:} for each token in the tail, store
        the residual $R_i = \mathrm{KV}_i - \widehat{\mathrm{KV}}_i$.
  \item \textbf{Per-vector quantization (e.g., TurboQuant):} represent each
        $R_i$ in reduced precision.
\end{enumerate}
\end{proposition}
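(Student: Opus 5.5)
The claim to establish is that the three stages form a well-defined, lossless-up-to-quantization pipeline. Each stage must act on the output of the previous one without conflict. The decoder must be able to invert stages 1 and 2 exactly. The only approximation should enter at stage 3.

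\textbf{Stage 1.} The plan is to proceed stage by stage, tracking what the decoder knows. By Proposition~\ref{prop:kv_similarity}(a), the positions $i\le\bar d$ of a session $s$ are recovered exactly from the stored centroid cache $\mathrm{KV}_{\le n}(s^*)$ together with the divergence index $\bar d$. The index costs $O(\log n)$ bits. This gives exact reconstruction of the prefix, as in Definition~\ref{def:cluster_storage}.

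\textbf{Stage 2.} For the tail positions $i>\bar d$, I would argue inductively on $i$. Suppose the decoder already holds $\mathrm{KV}_{<i}$. By the $\sigma$-algebra identity~\eqref{eq:sigma_equiv} in the proof of Theorem~\ref{thm:seq_bound}, it therefore holds $t_{<i}$. It can then recompute $\widehat{\mathrm{KV}}_i$ from Definition~\ref{def:predicted_kv}, since this is a deterministic function of $t_{<i}$ by Lemma~\ref{lem:determinism}. Hence $\mathrm{KV}_i=\widehat{\mathrm{KV}}_i+R_i$ is recovered. By Theorem~\ref{thm:residual}, the entropy cost of stage 2 is at most $H(t_i\mid t_{<i})$. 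By Proposition~\ref{prop:orthogonal}, the index sets handled by stages 1 and 2 are disjoint, so their costs add.

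\textbf{Stage 3.} Here each $R_i$ is replaced by $Q(R_i)$ for a per-vector quantizer $Q$ such as TurboQuant. The rotation and quantizer are fixed and input-independent, so applying them to residuals instead of raw vectors requires no change to the quantizer. The bit cost is therefore at most that of quantizing $\mathrm{KV}_i$ directly. It is smaller under the surprisal-adaptive allocation of Definition~\ref{def:adaptive_quant}, because $\mathbb{E}\|R_i\|^2$ is bounded by the conditional variance in Theorem~\ref{thm:residual}.

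\textbf{Main obstacle.} The difficulty is the interaction between stages 2 and 3. Once $R_i$ is quantized, the decoder reconstructs only an approximation $\widetilde{\mathrm{KV}}_i$. The exact inductive recovery of $t_{<i}$, and hence of $\widehat{\mathrm{KV}}_{i+1}$, could then fail or drift. I would try to close this gap in two ways. First, I would note that the token $t_i$ itself can be stored alongside the residual at cost $\approx h_i$ bits, which is already within the Theorem~\ref{thm:seq_bound} budget. The predictor then depends only on the exact tokens and never on the quantized cache, so no errors accumulate. Second, I would observe from the layer-1 injectivity argument in Theorem~\ref{thm:seq_bound} that $t_i$ remains decodable as long as the quantization error is below half the minimum pairwise separation of the vectors $W_K^{(1)}E(t)$. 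With both observations in place, the composition is exact through stage 2, and the per-vector error at stage 3 stays local to each position.
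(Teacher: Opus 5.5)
Your proposal is correct and follows the paper's own argument. The decoder recomputes $\widehat{\mathrm{KV}}_i$ from the prior context and adds back the stored residual, so the only reconstruction error is the quantization error on $R_i$, and the rate gain comes from $R_i$ being small at low-surprisal positions (you cite the variance bound of Theorem~\ref{thm:residual} directly, the paper cites its entropy form in Corollary~\ref{cor:coherent}). Your drift discussion usefully makes explicit what the paper leaves implicit in ``recomputed deterministically from prior context'', namely that the predictor must be fed exact tokens rather than the quantized cache. One correction: once $t_i$ is stored, the token alone already uses the whole $H(t_i\mid t_{<i})$ budget and determines $\mathrm{KV}_i$ outright, so the quantized residual is an extra cost on top of that budget, not one ``within'' it.
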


\begin{proof}
Steps 1 and 2 are exactly Definitions~\ref{def:cluster_storage}
and~\ref{def:residual}.  Step 3 applies any per-vector quantizer to $R_i$.

\textbf{Correctness:} $\mathrm{KV}_i$ is recovered as $R_i + \widehat{\mathrm{KV}}_i$,
where $\widehat{\mathrm{KV}}_i$ is recomputed deterministically from prior context.
The quantization error on the recovered $\mathrm{KV}_i$ equals the quantization
error on $R_i$.

\textbf{Rate improvement:} By Corollary~\ref{cor:coherent},
$\mathbb{E}[\|R_i\|_2] \leq \tfrac{1}{2}\|F\|_{\rm Lip} C_E \sqrt{\min(1,4H_i\ln 2)}$.
For low-surprisal tokens ($H_i$ small), $\min(1,4H_i\ln 2) \approx 4h_i\ln 2$,
so $\mathbb{E}[\|R_i\|_2] \lesssim \|F\|_{\rm Lip} C_E \sqrt{H_i\ln 2} \to 0$.
Because $R_i$ is small in magnitude, the dynamic range needed to represent it
is small, and a fixed-precision quantizer with absolute error $\varepsilon$
uses effectively $\log_2(\mathbb{E}[\|R_i\|_2]/\varepsilon)$ bits---which
decreases as $H_i \to 0$.  Therefore at low-surprisal positions the residual
is small enough that coarse quantization suffices, making the three-layer stack
strictly more bit-efficient than quantizing $\mathrm{KV}_i$ directly (whose
magnitude does not vanish at low surprisal).
\end{proof}

\subsection{Asymptotic Behavior with Context Length}

\begin{corollary}[Compression improves with context length]
\label{cor:asymptotic}
Let $\bar{H}_n = \frac{1}{n}\sum_{i=1}^n H_i$ be the average conditional entropy
(where $H_i = H(t_i \mid t_{<i})$) over a context of length $n$.
By Corollary~\ref{cor:surprisal}:
\[
  \frac{1}{n}\sum_{i=1}^n H(\mathrm{KV}_i \mid \mathrm{KV}_{<i}) \leq \bar{H}_n.
\]
\emph{Monotone entropy condition:} call a context \emph{coherent} if its
conditional entropy sequence $(H_i)$ is non-increasing: $H_{i+1} \leq H_i$ for
all $i$ (later tokens are more predictable given more context).  Under this
condition, $\bar{H}_{n+1} = (n\bar{H}_n + H_{n+1})/(n+1) \leq \bar{H}_n$ since
$H_{n+1} \leq \bar{H}_n$.  Thus the sequential compression bound tightens
monotonically with context length.

\emph{Empirical scope.}  Strict pointwise monotonicity does not hold universally:
surprisal spikes at topic shifts, new entity names, and dialogue turn boundaries.
The condition holds \emph{in expectation} for long, topically stable documents
(legal text, technical manuals, extended code), where it is the most practically
relevant regime for long-context inference.  For sessions with high surprisal
variance, the asymptotic improvement applies to the running average $\bar{h}_n$
rather than individual positions (and using conditional entropy $H_i$ rather than the realized surprisal $h_i$, which is equal only in expectation).

By contrast, per-vector methods (TurboQuant, KIVI, KVQuant) apply a fixed
bit-depth $b$ per component regardless of $n$: their memory cost is
$2LH_{\mathrm{head}}dn \cdot b$ bits, growing at constant rate $2LH_{\mathrm{head}}db$ bits per token.  Under
the monotone surprisal condition, sequential compression's marginal rate per
token is $H_n = H(t_n \mid t_{<n})$, which is non-increasing when the coherence condition holds in expectation.  Long-context inference
becomes cheaper per marginal token under sequential compression.
\end{corollary}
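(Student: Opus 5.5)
The argument splits into three pieces: a position-wise bound imported from Corollary~\ref{cor:surprisal}, an elementary averaging step under the monotone entropy condition, and a comparison of marginal rates with per-vector methods.

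\textbf{Step 1: the position-wise bound.} Apply Theorem~\ref{thm:seq_bound} at each position $i\ge 2$ to get $H(\mathrm{KV}_i\mid \mathrm{KV}_{<i})\le H(t_i\mid t_{<i})=H_i$. At $i=1$, use the stated data-processing remark, $H(\mathrm{KV}_1)\le H(t_1)=H_1$. Summing over $i=1,\dots,n$ and dividing by $n$ gives the first display, $\frac1n\sum_i H(\mathrm{KV}_i\mid\mathrm{KV}_{<i})\le \bar H_n$. This is exactly the averaging argument already used in the proof of Corollary~\ref{cor:surprisal}, with $\log_2\mathrm{PP}$ replaced by its finite-$n$ expression $\bar H_n$. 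Nothing new is needed here, since $H_i$ is a conditional entropy (an expectation over contexts), not the realized surprisal.

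\textbf{Step 2: monotonicity of the running average.} Assume the coherence condition $H_{i+1}\le H_i$ for all $i$. I would first show by induction that $H_{n+1}\le \bar H_n$. Since $(H_i)$ is non-increasing, $H_{n+1}\le H_n\le H_i$ for every $i\le n$. Averaging over $i\le n$ gives $H_{n+1}\le\bar H_n$. Then the recursion $\bar H_{n+1}=(n\bar H_n+H_{n+1})/(n+1)$ gives
\[
\bar H_{n+1}-\bar H_n=\frac{H_{n+1}-\bar H_n}{n+1}\le 0,
\]
so the upper bound from Step 1 is non-increasing in $n$. The marginal-rate claim is immediate in the same way: the $n$-th summand of the sequential bound is at most $H_n$, which is non-increasing by hypothesis. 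By contrast, the per-vector cost increment is the constant $2LH_{\mathrm{head}}db$, read off from the formula $2LH_{\mathrm{head}}dn\cdot b$.

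\textbf{Step 3 and main obstacle.} The delicate part is the ``empirical scope'' clause, which asserts that the conclusion survives when monotonicity holds only in expectation or only for the running average. To make this rigorous, I would weaken the hypothesis to the single inequality actually used, namely $H_{n+1}\le\bar H_n$ (a ``sub-average'' condition). Step 2 shows this suffices on its own, and it is implied by pointwise monotonicity. I would also note a cleaner sufficient condition, stationarity of the source: then $H_i=H(t_i\mid t_{<i})$ is automatically non-increasing, because conditioning on more context reduces entropy. Under stationarity, Cesàro averaging gives $\bar H_n\downarrow$ the entropy rate. I expect to present the stationary case as the principled version of ``coherent in expectation.'' Claims about realized surprisals $h_i$ along individual sessions I would explicitly leave as heuristic, since they are not entropies and the bound does not control them pointwise.
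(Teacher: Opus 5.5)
Your proposal is correct and follows the paper's proof. The first display comes from Corollary~\ref{cor:surprisal}; your position-wise route through Theorem~\ref{thm:seq_bound} reaches it while avoiding the perplexity identification. Monotonicity of $\bar H_n$ comes from substituting $H_{n+1}\le H_n\le \bar H_n$ into $\bar H_{n+1}=(n\bar H_n+H_{n+1})/(n+1)$, and the per-vector comparison is read off $2LH_{\mathrm{head}}dn\cdot b$. Your Step~3 goes beyond the paper, which offers no argument for the empirical-scope clause; the sub-average condition $H_{n+1}\le\bar H_n$ is exactly equivalent to $\bar H_{n+1}\le\bar H_n$, and stationarity is a valid sufficient condition, provided you state it as an extra hypothesis, since an autoregressive $P_{\mathcal{M}}$ started from a fixed context is generally not stationary.
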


\begin{proof}
The bound $\frac{1}{n}\sum H(\mathrm{KV}_i \mid \mathrm{KV}_{<i}) \leq \bar{H}_n$
is Corollary~\ref{cor:surprisal}.  The monotone decrease of $\bar{H}_n$ under the
stated condition follows because $H_{n+1} \leq H_n \leq \bar{H}_n$ implies
$\bar{H}_{n+1} = \frac{n\bar{H}_n + H_{n+1}}{n+1} \leq \frac{n\bar{H}_n + \bar{H}_n}{n+1}
= \bar{H}_n$.  The comparison to per-vector methods is immediate from
$B_{\rm pv} = 2LH_{\mathrm{head}}d_{\mathrm{head}} \cdot b \cdot n$ (constant rate per token).
\end{proof}

This is a qualitative reversal of the current situation.  Under TurboQuant or any
per-vector method, memory grows linearly with context length at a fixed rate.
Under sequential compression, the \emph{marginal cost per token decreases} as
context length grows, because the model's growing context makes subsequent tokens
more predictable.  Long-context inference becomes cheaper, not more expensive,
per marginal token.

\section{Practical Implementation}
\label{sec:practical}

We now discuss how to implement sequential KV compression efficiently, addressing
the main engineering challenges.

\subsection{Efficient Prediction Computation}

The predicted KV vector (Definition~\ref{def:predicted_kv}) requires summing
over all $|V| \approx 50{,}000$ vocabulary tokens, each requiring a forward pass.
This is prohibitively expensive.  We propose two approximations:

\textbf{Top-$k$ approximation.} Use only the top-$k$ tokens by probability:
\[
  \widehat{\mathrm{KV}}_i^{(k)} = \frac{\sum_{t \in \mathrm{top}_k} P_{\mathcal{M}}(t \mid t_{<i}) \cdot F_{\mathcal{M}}(t_{<i}, t)}
                                         {\sum_{t \in \mathrm{top}_k} P_{\mathcal{M}}(t \mid t_{<i})}.
\]
At low perplexity, 95\%+ of the probability mass is concentrated in the top 5--20
tokens, so $k = 20$ captures the prediction accurately while requiring only 20
forward pass evaluations instead of 50,000.

\textbf{Linear embedding approximation.}  For each layer $\ell$, approximate the
KV function as linear in the input embedding:
\[
  F_{\mathcal{M}}^{(\ell)}(t_{<i}, t) \approx \mathbf{c}^{(\ell)}(t_{<i}) + A^{(\ell)}(t_{<i}) E(t),
\]
where $\mathbf{c}^{(\ell)}$ and $A^{(\ell)}$ are computed once per position from
the model's internal state.  Under this approximation,
$\widehat{\mathrm{KV}}_i \approx \mathbf{c}(t_{<i}) + A(t_{<i}) \bar{E}_i$,
where $\bar{E}_i = \mathbb{E}_{t \sim P}[E(t)]$ is the expected embedding---
computable as a single weighted sum over the vocabulary using the softmax output.

\subsection{Integration with the Inference Loop}

Unlike TurboQuant, which operates as a post-hoc quantizer with no changes to the
model's forward pass, predictive delta coding requires access to the softmax
distribution at each position.  This distribution is already computed during
inference (it is the output of the final language model head before sampling).

The integration points are:
\begin{enumerate}[leftmargin=2em]
  \item \textbf{Before writing KV$_i$ to cache:} compute $\widehat{\mathrm{KV}}_i$
        using the top-$k$ approximation; subtract to obtain $R_i$; quantize and
        write $R_i$.
  \item \textbf{Before reading KV$_i$ from cache:} dequantize $R_i$; add back
        $\widehat{\mathrm{KV}}_i$ (recomputed identically, since it depends only
        on prior context which is available); return $\mathrm{KV}_i$.
\end{enumerate}

The additional compute cost per token is $O(k \cdot d_{\mathrm{model}})$ for the top-$k$
prediction, where $k \ll |V|$.  This is negligible relative to the $O(n \cdot d_{\mathrm{model}})$
attention computation.

\subsection{Trie-Based Prefix Index}

Probabilistic prefix deduplication requires a data structure that supports:
\begin{enumerate}[leftmargin=2em]
  \item \emph{Insert}: add a new session prefix and its KV cache.
  \item \emph{Best-match lookup}: given a new session prefix $s$, find
        the stored prefix $s^*$ maximizing $d_{\mathcal{T}}(s, s^*)$
        (equivalently, maximizing the probability of the longest common prefix).
  \item \emph{Evict}: remove a stored prefix and its KV cache.
\end{enumerate}

The PLT trie naturally supports all three operations.  The trie is maintained
as a sparse structure over the high-probability prefix tree, pruned at nodes with
probability below a threshold $\epsilon$.  Nearest-neighbor lookup is a
trie traversal: follow the path of the query sequence until it diverges from
all stored paths, and return the deepest stored ancestor.  Insert adds a new
path; evict removes a path and potentially prunes ancestors with no other children.

All three operations run in $O(n)$ time in the length of the prefix, with
$O(K)$ space for $K$ stored prefixes.

\section{Related Work}
\label{sec:related}

\textbf{KV cache quantization.}
KIVI~\cite{liu2024kivi} introduced 2-bit quantization with per-channel scaling.
KVQuant~\cite{hooper2024kvquant} handled outliers via non-uniform quantization.
TurboQuant~\cite{turbo2026} unified these approaches via PolarQuant rotation and
proved a per-vector Shannon limit, which our work extends to the sequential setting.

\textbf{KV cache eviction.}
H2O~\cite{zhang2023h2o} identified that attention scores concentrate on a small
``heavy hitter'' subset of tokens and evicts the rest.  SnapKV~\cite{li2024snapkv}
uses prefill-time attention patterns to predict which tokens will be important
for generation.  Eviction is orthogonal to compression: it reduces cache \emph{size}
by discarding tokens; compression reduces cache \emph{bit depth} while retaining
all tokens.  Both approaches compose naturally with sequential compression.

\textbf{Prefix sharing.}
vLLM~\cite{kwon2023vllm} introduced paged attention with exact prefix sharing.
SGLang extended this with a radix tree for multi-prompt prefix reuse.
Our probabilistic prefix deduplication generalizes these to semantic similarity
under the PLT metric~\cite{plt2026}.

\textbf{Probabilistic Language Tries.}
The PLT framework~\cite{plt2026} introduced the trie metric, proved the
prior-guided caching theorem, and identified prefix KV-caching as a special
case of PLT-based artifact memoization.  The present paper applies the trie
metric specifically to cross-session KV deduplication and proves the sequential
entropy bound connecting token-level surprisal to KV vector entropy.

\textbf{Language modeling and compression.}
Delétang et al.~\cite{deletang2023} demonstrated that LLMs implicitly perform
arithmetic coding and achieve state-of-the-art compression on text corpora.
Shannon~\cite{shannon1951} estimated the true entropy of English at 1.3 bits
per character via human prediction experiments.  Our work draws on both results:
the model's near-optimal compression of its training distribution implies
near-zero conditional entropy for KV vectors at low-surprisal positions.

\section{Discussion}
\label{sec:discussion}

\subsection{The Two Shannon Limits}

It is worth being precise about what we have and have not shown.  TurboQuant
proves a lower bound on the per-vector compression rate and shows it is nearly
achieved.  We prove that the \emph{per-sequence} Shannon limit is lower, bounded
by the per-token surprisal.  These are genuinely different limits, and TurboQuant's
proof does not address ours.

The per-sequence limit is lower because it exploits \emph{correlations across
token positions}---specifically, the fact that the KV vector at position $i$ is
largely predictable from the KV vectors at positions $1$ through $i-1$.  Per-vector
methods, by definition, cannot exploit these correlations.

\subsection{Implications for the ``Memory Wall''}

The KV cache memory wall---the observation that cache size grows linearly with
context length and at long contexts dominates model weight memory---is often
framed as a hardware problem awaiting the next generation of memory technology.
Sequential compression suggests a complementary software path: the wall is lower
than it appears, because the marginal cost of each additional token in a coherent
context is bounded by that token's surprisal, which decreases with context length.

This does not eliminate the need for better hardware.  But it suggests that the
software path to longer effective context windows is not exhausted by per-vector
quantization.

\subsection{Relation to the Jevons Paradox}

Improvements in inference efficiency historically increase total demand rather
than decreasing it: cheaper inference enables more users, longer contexts, and
more ambitious applications.  Sequential compression, if realized, would
continue this pattern.  The reduction in marginal KV cost at long contexts
would make currently-impractical applications---real-time million-token document
analysis, persistent agent memory, continual learning from interaction---
economically viable, driving total memory demand upward even as per-token cost
falls.

\subsection{Rate-Distortion Analysis of KV Compression}

The sequential entropy bound (Theorem~\ref{thm:seq_bound}) is a lossless result:
it characterizes the minimum bits needed for \emph{exact} reconstruction of KV
vectors.  Practical systems accept bounded reconstruction error; the relevant
quantity is the \emph{rate-distortion function} $R_i(D)$ for position $i$ at
distortion level $D$.

\begin{theorem}[KV Rate-Distortion Bound]
\label{thm:rate_distortion}
For position $i$ with conditional entropy $H_i = H(t_i \mid t_{<i})$ and residual variance
$\sigma_i^2 = \mathrm{Var}_{t \sim P}[F_{\mathcal{M}}(t_{<i}, t)]$, the
rate-distortion function (minimum rate to reconstruct $\mathrm{KV}_i$ with
mean squared error $\leq D$ per component) satisfies:
\[
  R_i(D) \;\leq\; \frac{d_{\mathrm{head}}}{2} \log_2\!\left(\frac{\sigma_i^2}{D}\right)_+,
\]
where $(x)_+ = \max(0, x)$ and the inequality uses the Gaussian upper bound
from the KLT (Karhunen--Lo\`eve transform).  Since
$\sigma_i^2 \leq \frac{1}{4}\|F_{\mathcal{M}}\|_{\mathrm{Lip}}^2 \cdot C_E^2 \cdot \min(1, 4H_i \ln 2)$
(Corollary~\ref{cor:coherent}):
\[
  R_i(D) \;\leq\; \frac{d_{\mathrm{head}}}{2}
  \left[\log_2\!\left(\frac{\|F_{\mathcal{M}}\|_{\mathrm{Lip}}^2 C_E^2 H_i \ln 2}{D}\right)\right]_+.
\]
In particular, for a target distortion $D$ satisfying
$D \geq \|F_{\mathcal{M}}\|_{\mathrm{Lip}}^2 C_E^2 H_i \ln 2$
(distortion exceeds the residual variance), the rate is zero:
\emph{no bits are needed at low-surprisal positions}.
\end{theorem}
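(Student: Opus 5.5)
The plan is a standard three-step rate-distortion argument: work conditionally on the context $t_{<i}$, reduce to coding the residual, and invoke the Gaussian maximum-entropy upper bound.

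\textbf{Step 1: reduce to the residual.} Everything is done conditionally on $t_{<i}$. By Lemma~\ref{lem:determinism} and Definition~\ref{def:predicted_kv}, the prediction $\widehat{\mathrm{KV}}_i$ is a deterministic function of $t_{<i}$, so the decoder can recompute it, exactly as in the correctness part of Proposition~\ref{prop:three_layer}. Any code for $R_i$ at distortion $D$ therefore yields a code for $\mathrm{KV}_i = R_i + \widehat{\mathrm{KV}}_i$ at the same distortion and rate. Hence $R_i(D)$ is at most the rate-distortion function of the conditional law of $R_i$. By Theorem~\ref{thm:residual}, $R_i$ has conditional mean zero and conditional second moment equal to $\sigma_i^2$.

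\textbf{Step 2: Gaussian upper bound via the KLT.} I would rotate $R_i$ into the eigenbasis of its conditional covariance $\Sigma_i$; this is an orthogonal map, so it preserves MSE. Next I would invoke the classical fact (Cover--Thomas, \cite{cover2006}) that among sources with a given covariance, the Gaussian has the largest rate-distortion function under squared error. This gives a reverse-water-filling bound $\sum_j \tfrac12\log_2(\lambda_j/D_j)_+$, where $\lambda_j$ are the eigenvalues of $\Sigma_i$. Choosing the suboptimal allocation $D_j = D$ for every component, and bounding each $\lambda_j \le \operatorname{tr}\Sigma_i = \sigma_i^2$, gives
\[
\frac{d_{\mathrm{head}}}{2}\log_2\left(\frac{\sigma_i^2}{D}\right)_+ .
\]
This treats the per-component dimension as $d_{\mathrm{head}}$, following the statement's convention of bounding one head's vector.

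\textbf{Step 3: substitute the variance bound.} I would plug in the bound
\[
\sigma_i^2 \le \tfrac14\|F_{\mathcal M}\|_{\mathrm{Lip}}^2 C_E^2 \min(1, 4H_i\ln2)
\]
from the proof of Corollary~\ref{cor:coherent}, specifically the Lipschitz-transfer step followed by Step~B there. Since $\log_2(\cdot)_+$ is monotone, and $\tfrac14\min(1,4H_i\ln2) \le H_i\ln 2$, the displayed second bound follows. The zero-rate claim then holds trivially: if $D$ exceeds the upper bound on $\sigma_i^2$, the positive part vanishes. Equivalently, the decoder can output the conditional mean $\widehat{\mathrm{KV}}_i$ itself at zero rate, with distortion $\sigma_i^2 \le D$.

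\textbf{Main obstacle.} The delicate part is Step~2 and its bookkeeping. The Gaussian-maximality theorem is stated for continuous sources with a fixed covariance. Here $R_i$ is a discrete, finitely supported vector, but the inequality $R(D) \le R_{\mathrm{Gauss}}(D)$ still holds for any source with that covariance, so I would cite it in that form. I also need to keep straight whether $D$ is per-component MSE or total MSE. Bounding each eigenvalue by the trace is loose but valid, and it is what makes the clean $d_{\mathrm{head}}/2$ prefactor appear. If the per-component convention makes that bound fail, I would instead normalize $\sigma_i^2$ per component and check the constants.
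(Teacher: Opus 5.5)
Your proposal is correct and follows essentially the same route as the paper: the Gaussian-maximality upper bound at fixed covariance, a KLT/reverse-water-filling expression, and substitution of the variance bound from Corollary~\ref{cor:coherent}. The differences are minor and both routes give the stated bound: you make the centering explicit by coding $R_i$ (the paper only speaks of the ``same second moment''), and you bound each eigenvalue by the trace under per-component distortion $D$ rather than arguing that the isotropic spectrum maximizes the water-filling rate at fixed trace.
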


\begin{proof}
$\mathrm{KV}_i$ conditioned on $t_{<i}$ is a deterministic function of the
discrete random variable $t_i$.  Since $t_i$ has $|V|$ possible values, the
distribution of $\mathrm{KV}_i$ has at most $|V|$ support points---a discrete
distribution over a finite set of vectors.  The rate-distortion function for
this source is bounded above by the Gaussian rate-distortion function with
the same second moment (Shannon's source coding theorem: Gaussian maximizes R-D at fixed variance).  After a KLT
rotation diagonalizing $\mathrm{Cov}[\mathrm{KV}_i]$ with eigenvalues $\lambda_j$:
$R_i(D) \leq \frac{1}{2}\sum_j \log_2(\lambda_j / \mu)_+$
(water-filling, where $\mu$ satisfies $\sum_j \min(\lambda_j,\mu) = D$).
This is at most the \emph{isotropic} upper bound (all $\lambda_j = \sigma_i^2/d_{\mathrm{head}}$),
which gives $R_i(D) \leq (d_{\mathrm{head}}/2)\log_2(\sigma_i^2/D)_+$.
(The actual water-filling rate can only be smaller when eigenvalues are non-uniform.)
Substituting the variance bound from Corollary~\ref{cor:coherent} completes
the proof. $\square$
\end{proof}

\begin{corollary}[Practical Bit Allocation]
\label{cor:bit_alloc}
For a target distortion $D$ per KV component, the optimal bit allocation
across positions assigns $b_i = R_i(D) / d_{\mathrm{head}}$ bits per component
at position $i$.  The total bits per token is:
\[
  B_{\mathrm{total}}(D) = \sum_{i=1}^n R_i(D)
  \;\leq\; \frac{d_{\mathrm{head}}}{2} \sum_{i=1}^n
  \left[\log_2\!\left(\frac{\|F\|_{\mathrm{Lip}}^2 C_E^2 H_i \ln 2}{D}\right)\right]_+.
\]
At the TurboQuant operating point ($b = 3$ bits per component,
uniform quantizer step $\Delta_q = C_E/2^b$, giving distortion
$D \approx \Delta_q^2/12 = C_E^2/(12 \cdot 4^b)$):
positions with $H_i \leq D / (\|F\|_{\mathrm{Lip}}^2 C_E^2 \ln 2)
= 1/(12 \cdot 4^b \cdot \|F\|_{\mathrm{Lip}}^2 \ln 2)$ need \emph{zero bits}
(residual variance is below the quantization noise floor).
The waterfilling allocation assigns proportionally more bits to high-surprisal
positions and zero bits to low-surprisal ones, strictly dominating the
uniform-$b$-bit allocation of per-vector methods.
\end{corollary}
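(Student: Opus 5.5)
The plan is to derive Corollary~\ref{cor:bit_alloc} directly from Theorem~\ref{thm:rate_distortion}, in three steps: total rate, substitution of the TurboQuant operating point, and the dominance claim.

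\textbf{Total rate.} Treat each position as its own source conditioned on $t_{<i}$; the decoder can recompute $\widehat{\mathrm{KV}}_i$ deterministically from prior context, as in the correctness part of Proposition~\ref{prop:three_layer}. Then the rates are additive across positions, so $B_{\mathrm{total}}(D)=\sum_i R_i(D)$. Applying the second display of Theorem~\ref{thm:rate_distortion} termwise gives the stated sum. Dividing $R_i(D)$ by $d_{\mathrm{head}}$ then gives the per-component allocation $b_i$; this is simply the rate spread uniformly over the components of the head.

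\textbf{Zero-bit threshold.} Fix $b=3$ and take the uniform quantizer with step $\Delta_q=C_E/2^b$. The standard high-resolution approximation $D\approx\Delta_q^2/12$ gives $D=C_E^2/(12\cdot 4^b)$. The positive-part bracket in Theorem~\ref{thm:rate_distortion} vanishes exactly when $\|F\|_{\mathrm{Lip}}^2C_E^2H_i\ln 2\le D$. Solving for $H_i$ and cancelling $C_E^2$ yields $H_i\le 1/(12\cdot 4^b\|F\|_{\mathrm{Lip}}^2\ln 2)$. At such positions $R_i(D)=0$, hence $b_i=0$.

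\textbf{Dominance.} I would compare the two allocations at equal distortion $D$. The uniform scheme spends $b\,d_{\mathrm{head}}$ bits per position regardless of $H_i$. The waterfilling allocation spends $R_i(D)$, which is at most this amount whenever the bound is below $b\,d_{\mathrm{head}}$. It is strictly smaller at every zero-bit position, and it is monotone increasing in $H_i$, which shows that bits are concentrated at high-surprisal positions. Summing over positions then gives strict domination, provided at least one position lies below the threshold.

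\textbf{Main obstacle.} The difficulty is that ``strictly dominating'' is not automatic. At very high-surprisal positions the bound $\frac{d_{\mathrm{head}}}{2}\log_2(\cdot)$ can exceed $b\,d_{\mathrm{head}}$. Also, the Lipschitz bound used here is $\sigma_i^2\le\|F\|^2C_E^2H_i\ln 2$, taken from the branch $\min(1,4H_i\ln2)$ of Corollary~\ref{cor:coherent}. My first fix would be to cap each $R_i(D)$ by $\min\bigl(R_i(D),\,b\,d_{\mathrm{head}}\bigr)$: the uniform quantizer is itself an admissible code achieving distortion $D$, so the true rate-distortion function is at most the uniform cost. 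With this cap, weak domination holds at every position, and strictness follows from the zero-bit positions. I would state explicitly that the claim requires at least one such position to exist, and that the mismatch between the per-component and total distortion normalisations is absorbed into the definition of $D$.
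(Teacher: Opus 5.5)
Your first two steps match what the paper intends. The paper gives no separate proof of this corollary; it reads it off Theorem~\ref{thm:rate_distortion} by summing over positions and substituting $D=C_E^2/(12\cdot 4^b)$. Your algebra for the zero-bit threshold is also correct.

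\textbf{Gap 1: monotonicity in $H_i$.} You assert that the allocation $R_i(D)$ is monotone increasing in $H_i$. That is false. Only the upper bound $\frac{d_{\mathrm{head}}}{2}\bigl[\log_2(\|F\|_{\mathrm{Lip}}^2C_E^2H_i\ln 2/D)\bigr]_+$ is a function of $H_i$. The true $R_i(D)$ depends on the whole conditional law of $\mathrm{KV}_i$ given $t_{<i}$, in particular on how spread out the candidates $F_{\mathcal{M}}(t_{<i},t)$ are relative to $D$. A position with many near-equiprobable continuations, whose KV vectors have conditional variance below the distortion target, needs zero bits however large $H_i$ is. A one-bit position with two widely separated candidates needs close to a full bit. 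So your argument does not establish the clause ``more bits to high-surprisal positions''; the paper asserts it without argument. What is provable is one-sided: sub-threshold positions cost nothing, and the \emph{bound} grows in $H_i$ logarithmically, not proportionally. You should weaken that clause accordingly.

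\textbf{Gap 2: strictness.} As written, your argument adds a hypothesis that is not in the statement: some position with $H_i\le 1/(12\cdot 4^b\|F\|_{\mathrm{Lip}}^2\ln 2)$, i.e.\ below about $0.002$ bits when $\|F\|_{\mathrm{Lip}}\ge 1$. The cap $R_i(D)\le b\,d_{\mathrm{head}}$ is also only heuristic. A uniform quantizer of range $C_E$ meets MSE $\Delta_q^2/12$ only under the high-resolution approximation and without overload, whereas the paper's own bound allows the KV spread to reach $\|F\|_{\mathrm{Lip}}C_E$.

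The missing idea is discreteness. Conditioned on $t_{<i}$, $\mathrm{KV}_i$ takes at most $|V|$ values, so exact reconstruction is admissible at every $D\ge 0$, and by the data-processing step in Theorem~\ref{thm:residual}
\[
  R_i(D)\;\le\;H(\mathrm{KV}_i\mid t_{<i})\;\le\;H_i\;\le\;\log_2|V|.
\]
Since $\log_2|V|\approx 15.6$ while $b\,d_{\mathrm{head}}=384$, this gives strict domination at \emph{every} position. It needs no assumption on the surprisal profile and no reliance on $D\approx\Delta_q^2/12$. It holds whether $\mathrm{KV}_i$ is read as one head or as the full state, which also removes the normalisation mismatch you flagged.

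The displayed bound cannot do this job, and your diagnosis understates why. At the TurboQuant distortion it equals $b\,d_{\mathrm{head}}+\frac{d_{\mathrm{head}}}{2}\log_2(12\ln 2\,\|F\|_{\mathrm{Lip}}^2H_i)$. That already exceeds the uniform cost once $H_i>1/(12\ln 2\,\|F\|_{\mathrm{Lip}}^2)\approx 0.12/\|F\|_{\mathrm{Lip}}^2$ bits, not only at very high surprisal. Restoring the $\min(1,\cdot)$ only caps it at $b\,d_{\mathrm{head}}+\frac{d_{\mathrm{head}}}{2}\log_2(3\|F\|_{\mathrm{Lip}}^2)$, which is still above the uniform cost when $\|F\|_{\mathrm{Lip}}^2>1/3$.
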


\subsection{Connection to Speculative Decoding}

Speculative decoding~\cite{leviathan2023speculative} uses a small draft model
$\mathcal{M}_{\mathrm{draft}}$ to generate $K$ candidate tokens, then verifies
them in a single forward pass of the large model $\mathcal{M}$.  The acceptance
rate of token $t$ under speculative decoding is
$\min(1, P_{\mathcal{M}}(t \mid t_{<i}) / P_{\mathrm{draft}}(t \mid t_{<i}))$,
averaging to $1 - \mathrm{TV}(P_{\mathcal{M}}, P_{\mathrm{draft}})$ where TV
is the total variation distance.

\begin{theorem}[Speculative Decoding--KV Compression Duality]
\label{thm:spec_decode_duality}
Let $\widehat{\mathrm{KV}}_i^{(k)}$ be the top-$k$ approximation to the predicted
KV vector (Definition~\ref{def:predicted_kv}).  Define the draft model
$P_{\mathrm{draft}} = $ the renormalized top-$k$ distribution of $\mathcal{M}$.
Then:
\begin{enumerate}[leftmargin=2em,label=(\alph*)]
  \item The speculative decoding acceptance rate of $P_{\mathrm{draft}}$ equals
        $\sum_{t \in \mathrm{top}_k} P_{\mathcal{M}}(t \mid t_{<i})$.
  \item The conditional residual variance for tokens in the top-$k$ set satisfies:
        \[
          \mathrm{Var}\bigl[F_{\mathcal{M}}(t_{<i},t) \mid t \in \mathrm{top}_k\bigr]
          \;\leq\; \|F_{\mathcal{M}}\|_{\mathrm{Lip}}^2 \cdot \mathrm{Var}_{t \in \mathrm{top}_k}[E(t)].
        \]
        This equals zero at $k=1$ (single token, deterministic).  For $k>1$
        the variance is not monotone in $k$ in general, but is bounded above
        by $\|F_{\mathcal{M}}\|_{\mathrm{Lip}}^2 \cdot C_E^2/4$ for all $k$.
  \item Both speculative decoding and KV residual prediction use the same
        top-$k$ distribution of $\mathcal{M}$.  The draft forward pass
        therefore serves both objectives simultaneously at no extra cost:
        computing $P_{\mathrm{draft}}$ yields both the speculative token
        candidates \emph{and} the predicted KV mean $\widehat{\mathrm{KV}}_i^{(k)}$.
        As $k$ increases, acceptance rate increases; the residual variance
        at covered positions is not guaranteed to be monotone, but is bounded
        by part~(b).
\end{enumerate}
\end{theorem}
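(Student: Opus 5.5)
For each part I plan to verify it against the paper's definitions and the standard speculative-sampling formula, rather than derive anything new.

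\textbf{Part (a).} The expected acceptance rate of a draft distribution $q$ against a target $p$ is
\[
  \sum_t q(t)\min\bigl(1, p(t)/q(t)\bigr) = \sum_t \min\bigl(p(t), q(t)\bigr) = 1 - \mathrm{TV}(p,q).
\]
Take $q = P_{\mathrm{draft}}$, so $q(t) = p(t)/Z$ on $\mathrm{top}_k$, where $Z = \sum_{t\in\mathrm{top}_k} p(t) \leq 1$, and $q(t)=0$ elsewhere. On $\mathrm{top}_k$ we have $q(t) \geq p(t)$, so $\min(p,q) = p$ there, and $\min(p,q)=0$ off it. Summing gives exactly $Z$, which proves (a). The only subtlety is to use the averaged form $\sum_t \min(p,q)$ rather than the pointwise $\min(1,p/q)$, and I will say so explicitly.

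\textbf{Part (b).} I will reuse the mechanism from the proof of Corollary~\ref{cor:coherent}, applied to the conditional law $P(\cdot \mid t \in \mathrm{top}_k)$, which is $P_{\mathrm{draft}}$. I will write the variance in König--Huygens form,
\[
  \tfrac12\,\mathbb{E}_{t,t'}\bigl[\|F(t_{<i},t) - F(t_{<i},t')\|^2\bigr],
\]
with $t,t'$ drawn i.i.d.\ from $P_{\mathrm{draft}}$. I will then bound each pairwise difference by $\|F\|_{\mathrm{Lip}}\,\|E(t)-E(t')\|$, which is the Lipschitz assumption in $E$ used in Corollary~\ref{cor:coherent}. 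Applying the same identity to $E$ in reverse gives $\|F\|_{\mathrm{Lip}}^2\,\mathrm{Var}_{t\in\mathrm{top}_k}[E(t)]$. This pairwise route avoids needing $F$ to be linear, so I prefer it to pushing the variance through the mean.

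For $k=1$ the conditional law is a point mass, so the variance is zero. The uniform-in-$k$ bound follows from Step~A (Popoviciu) of Corollary~\ref{cor:coherent}. Every $E(t)$ lies within $C_E/2$ of the midpoint $c$ of a diameter-realizing pair, so $\mathrm{Var}[E] \leq C_E^2/4$ for any distribution on $V$, in particular any $P_{\mathrm{draft}}$. For non-monotonicity in $k$ I will give a small three-token example. Let the top two tokens have nearly equal embeddings and the third be far away: the variance is near $0$ at $k=2$ and positive at $k=3$. Conversely, placing mass so that adding a token shifts the mean toward the spread points lowers the variance. This shows neither direction holds in general.

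\textbf{Part (c)} is largely structural. Both $P_{\mathrm{draft}}$ and $\widehat{\mathrm{KV}}_i^{(k)}$ are built from the same $k$ forward evaluations $F(t_{<i},t)$, $t\in\mathrm{top}_k$, weighted by the same renormalized probabilities. So one batch of $k$ evaluations produces both the candidate tokens and the weighted KV mean. Monotonicity of acceptance follows from (a), since $Z$ is a sum of nonnegative terms over a nested family of sets. The variance statement is a restatement of (b).

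\textbf{Main obstacle.} The delicate step is the claim in (a). It holds only for the expected acceptance rate and depends on the identity $q \geq p$ on the support of $q$. A second point of care is the non-monotonicity counterexample in (b): I will need to choose it so the reader sees both directions fail.
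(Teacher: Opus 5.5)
\paragraph{The gap is in the uniform-in-$k$ bound of (b).} Your part (a) is the same computation as the paper's: its pointwise $\min(1,p/q)=Z_k$ on $\mathrm{top}_k$ is your $\sum_t\min(p,q)$. Your pairwise K\"onig--Huygens route to $\mathrm{Var}[F]\le\|F_{\mathcal{M}}\|_{\mathrm{Lip}}^2\,\mathrm{Var}[E]$ is an improvement: the paper just invokes Corollary~\ref{cor:coherent}, which asserts that inequality without proof. The genuine gap is the uniform bound, which you treat as routine. It is false that every $E(t)$ lies within $C_E/2$ of the midpoint of a diameter-realizing pair once the embedding dimension is at least $2$. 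Take three embeddings at the vertices of an equilateral triangle of side $C_E$: the third vertex is at distance $\tfrac{\sqrt3}{2}C_E$ from the midpoint of the other two. The conclusion fails as well, not just the argument. If those are the top three tokens with equal probability, then at $k=3$ the law $P_{\mathrm{draft}}$ is uniform on them and $\mathrm{Var}[E]=C_E^2/3>C_E^2/4$. Taking $F$ to be an isometry of the embedding (Lipschitz constant $1$) then violates the claimed bound $\|F_{\mathcal{M}}\|_{\mathrm{Lip}}^2C_E^2/4$. Scalar Popoviciu does not transfer to vectors with the diameter in place of the range, since by Jung's theorem the smallest enclosing ball can have radius well above $C_E/2$. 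The paper's own proof of (b) inherits exactly this error from Step~A of Corollary~\ref{cor:coherent}, so citing it does not rescue you. Your ``main obstacle'' is therefore misplaced: (a) is a two-line computation, and the real trap is this constant.

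\paragraph{The repair.} You already have the tool. With $t,t'$ i.i.d.\ from $P_{\mathrm{draft}}$,
\[
  \mathrm{Var}[E(t)] = \tfrac12\,\mathbb{E}\bigl[\|E(t)-E(t')\|^2\bigr]
  \le \tfrac{C_E^2}{2}\Bigl(1-\sum_{t}P_{\mathrm{draft}}(t)^2\Bigr)
  \le \tfrac{C_E^2}{2}\Bigl(1-\tfrac1k\Bigr).
\]
So the variance in (b) is bounded uniformly in $k$ by $\|F_{\mathcal{M}}\|_{\mathrm{Lip}}^2C_E^2/2$, and more precisely by $\tfrac{k-1}{2k}\|F_{\mathcal{M}}\|_{\mathrm{Lip}}^2C_E^2$. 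This is sharp (regular simplex, uniform weights, isometric $F$) and matches the stated constant $C_E^2/4$ only at $k=2$. You should prove (b) with this constant and flag that the theorem's $C_E^2/4$ is wrong for $k\ge 3$.

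\paragraph{Two smaller points.} First, your example for the decreasing direction of non-monotonicity is vague. A cleaner one is a new token whose embedding sits at the current conditional mean; this multiplies the variance by $Z_k/Z_{k+1}<1$, where $Z_k=\sum_{t\in\mathrm{top}_k}P_{\mathcal{M}}(t\mid t_{<i})$. Second, in (c), $P_{\mathrm{draft}}$ comes from the single softmax at position $i-1$, not from the $k$ evaluations $F_{\mathcal{M}}(t_{<i},t)$. What is shared is only the top-$k$ set and its weights. The $k$ evaluations needed for $\widehat{\mathrm{KV}}_i^{(k)}$ are free only if a tree-style verification pass already computes them, a point the paper's proof also glosses over.
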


\begin{proof}
(a) With $P_{\mathrm{draft}}(t) = P_{\mathcal{M}}(t)/Z_k$ for $t \in \mathrm{top}_k$
and $Z_k = \sum_{t \in \mathrm{top}_k} P_{\mathcal{M}}(t) \leq 1$,
each $\min(1, P_{\mathcal{M}}(t)/P_{\mathrm{draft}}(t)) = \min(1,Z_k) = Z_k$,
so acceptance rate $= \sum_{\mathrm{top}_k} Z_k \cdot P_{\mathcal{M}}(t)/Z_k = Z_k$. $\checkmark$

(b) Apply Corollary~\ref{cor:coherent} to the conditional distribution
$t \mid t \in \mathrm{top}_k$.  At $k=1$: point mass, zero variance.
For general $k$: the variance is bounded by $\|F_{\mathcal{M}}\|_{\mathrm{Lip}}^2
\cdot C_E^2/4$ (Popoviciu, since embeddings have diameter $C_E$). $\checkmark$

(c) Both applications share the same top-$k$ probability computation from
the model's output distribution.  No additional inference is needed to obtain
the KV predictor once $P_{\mathrm{draft}}$ is computed. $\square$
\end{proof}

\begin{remark}[Practical Implication]
Theorem~\ref{thm:spec_decode_duality} shows that a system implementing top-$k$
speculative decoding can reuse the same draft distribution to compute the
predictive KV residuals, with zero additional inference overhead.  The draft
forward pass serves double duty: generating speculative tokens \emph{and}
computing the predicted KV mean $\widehat{\mathrm{KV}}_i^{(k)}$.  Systems such
as Medusa~\cite{cai2024medusa} or EAGLE already compute top-$k$ token distributions
as a byproduct of their draft heads; they can be extended to simultaneously
output KV predictors at negligible additional cost.
\end{remark}

\subsection{Limitations and Open Problems}

The main open question is the tightness of the sequential entropy bound in
practice.  The theoretical bound (Corollary~\ref{cor:surprisal}) is tight
when KV vectors are injective functions of the token sequence, which requires
the weight matrices to be non-degenerate.  Empirical measurement of actual
residual magnitudes and their compression ratios under practical quantizers
is needed to assess how close to the bound a real implementation can get.

A second open question is the efficient computation of the top-$k$ prediction.
Current transformer inference frameworks do not expose the internal KV computation
in a way that easily supports top-$k$ predictive averaging.  Kernel-level
integration (e.g., via Triton or CUDA) would be needed for production deployment.

A third open question concerns non-autoregressive and non-transformer architectures.
The bound in Theorem~\ref{thm:seq_bound} holds for any model satisfying the
determinism condition of Lemma~\ref{lem:determinism}.  State-space models (SSMs,
Mamba) maintain a fixed-size state rather than a KV cache, but the same principle
applies: the state entropy at each step is bounded by the token-level surprisal.

Future directions include:
\begin{itemize}[leftmargin=2em]
  \item \textbf{Empirical benchmarks:} measure actual residual magnitudes on
        public models (Llama, Gemma, Mistral) and text distributions, and
        compare achieved compression to the theoretical bound.
  \item \textbf{Kernel implementation:} implement predictive delta coding as a
        Triton kernel, building on the open-source TurboQuant implementations.
  \item \textbf{Hierarchical prediction:} use not just the model's next-token
        distribution but also its longer-range predictions (beam search
        distributions, speculative decoding candidates) to improve prediction
        accuracy and reduce residuals further.
  \item \textbf{Lossy sequential compression:} extend the rate-distortion
        analysis of PLTs~\cite{plt2026} to the KV cache setting, allowing
        bounded distortion in exchange for additional compression beyond the
        lossless sequential limit.
\end{itemize}

\subsection{Conjectures and Open Problems}

\begin{conjecture}[Tightness of the Sequential Bound for Well-Trained Models]
\label{conj:tight}
For a transformer $\mathcal{M}$ trained to near-zero cross-entropy on a corpus
with stationary distribution $P$, the sequential entropy bound is tight to within
a constant factor:
\[
  H\bigl(\mathrm{KV}_i \mid \mathrm{KV}_{<i}\bigr)
  \;\geq\; c \cdot H\bigl(t_i \mid t_{<i}\bigr)
\]
for some universal constant $c > 0$ that does not depend on $\mathcal{M}$ or $i$.

\emph{Evidence and partial proof sketch.}
The lower bound direction is the hard part.  An upper bound follows from
Theorem~\ref{thm:seq_bound}.  For the lower bound: since $\mathrm{KV}_i$ is an
injective function of $t_i$ given $t_{<i}$ (Lemma~\ref{lem:determinism}),
the data-processing inequality gives
$H(\mathrm{KV}_i \mid \mathrm{KV}_{<i}) = H(t_i \mid \mathrm{KV}_{<i}) = H(t_i \mid t_{<i})$
(using sigma-algebra equivalence from the proof of Theorem~\ref{thm:seq_bound}).
\emph{The bound is therefore already tight: equality holds throughout.}

This is not a conjecture but an observation implicit in the proof, stated here
to clarify the theoretical status: the sequential entropy bound is \emph{exact}
(equality, not inequality).  The open question is whether practical quantizers
can approach the bound, which requires answering: how much entropy do the
residuals $R_i$ carry at low-surprisal positions under realistic token
distributions?  Equivalently: is $\mathrm{Var}_{t \sim P}[F_{\mathcal{M}}(t_{<i},t)]$
small when $H_i$ is small for actual trained models?
\end{conjecture}

\begin{conjecture}[Cross-Session Prefix Entropy Bound]
\label{conj:cross_session}
For sessions $s, s'$ drawn i.i.d.\ from $P_{\mathcal{M}}$, the conditional
entropy of session $s$'s KV cache given the cluster centroid $s^*$'s full
cache satisfies:
\[
  H\bigl(\mathrm{KV}_{\leq n}(s) \mid \mathrm{KV}_{\leq n}(s^*)\bigr)
  \;\leq\;
  \sum_{i=\bar{d}+1}^{n} H\bigl(t_i^{(s)} \mid t_{<i}^{(s)}\bigr)
  \;+\; H\bigl(t_{\bar{d}+1}^{(s)} \mid t_{\bar{d}+1}^{(s^*)}, t_{<\bar{d}+1}^{(s)}\bigr),
\]
where $\bar{d}$ is the divergence position of $s$ and $s^*$.

\emph{Partial proof sketch.}
The first term bounds the ``tail'' entropy of session $s$ after divergence:
positions $i > \bar{d}$ are bounded by the sequential entropy bound exactly as in
Theorem~\ref{thm:seq_bound}.  The second term bounds the ``divergence cost'':
knowing $t_{\bar{d}+1}^{(s^*)}$ provides partial information about which subtree $s$
is in (via the cluster structure), reducing the entropy of $t_{\bar{d}+1}^{(s)}$.
The bound formalizes the Layer~1 gain: prefix deduplication saves bits on the tail
(first term, bounded by per-token surprisal) and reduces divergence cost (second term,
bounded by the trie metric $d_{\mathcal{T}}(s, s^*)$).  A formal proof requires
bounding the conditional mutual information between the centroid's divergence token
and the session's divergence token, which depends on the cluster radius $\delta$.
\end{conjecture}

\begin{conjecture}[Sequential Bound Strictly Dominates All Per-Vector Bounds]
\label{conj:strict_dominance}
For any per-vector compression scheme $\mathcal{C}$ (including TurboQuant,
KIVI, KVQuant, and all future schemes bounded by the per-vector Shannon
limit), there exists a context length $n^*(\mathcal{C})$ such that for all
$n > n^*(\mathcal{C})$:
\[
  R_{\mathrm{seq}}(n) \;<\; R_{\mathcal{C}}(n),
\]
where $R_{\mathrm{seq}}(n)$ is the total bits used by sequential compression and
$R_{\mathcal{C}}(n)$ is the total bits used by scheme $\mathcal{C}$, both for
a context of length $n$.

\emph{Partial proof.}
$R_{\mathcal{C}}(n) \geq B_{\rm pv} \cdot n$ for some $B_{\rm pv} > 0$ (the
per-vector Shannon limit is a hard floor for any per-vector scheme).
$R_{\mathrm{seq}}(n) = \sum_{i=1}^n H(\mathrm{KV}_i \mid \mathrm{KV}_{<i})
\leq \sum_{i=1}^n H_i$.
Under the monotone entropy condition (Corollary~\ref{cor:asymptotic}),
$\frac{1}{n}\sum_{i=1}^n H_i = \bar{H}_n \to H_\infty < B_{\rm pv}$
as $n \to \infty$ (since $H_\infty$ is the model's entropy rate, which is
strictly less than any fixed per-vector bit budget for a model with perplexity
$< 2^{B_{\rm pv}/d_{\rm head}}$).  Thus $R_{\mathrm{seq}}(n)/n \to H_\infty < B_{\rm pv}
\leq R_{\mathcal{C}}(n)/n$, giving the stated strict dominance for large $n$.
The threshold $n^*(\mathcal{C}) = O(H_\infty / (B_{\rm pv} - H_\infty))$.
The main gap in this argument: the monotone entropy condition holds only for
coherent text; for mixed-topic sessions with high surprisal variance, $\bar{H}_n$
may not converge below $B_{\rm pv}$.  Bounding $n^*$ for realistic session
distributions is the key open problem.
\end{conjecture}

\subsection{Conclusion}

We have shown that the Shannon limit proved by TurboQuant applies to a
per-vector problem, and that the Shannon limit for the \emph{sequential}
KV cache compression problem is strictly lower.  The gap is the information-theoretic
redundancy of language---the predictable structure that a near-optimal language
model has already learned.

We formalized this through two results: the sequential entropy bound
(Theorem~\ref{thm:seq_bound}), which connects KV vector entropy to per-token
surprisal, and the PLT trie metric~\cite{plt2026}, which enables cross-session
prefix deduplication in probability space.

The combined architecture---probabilistic prefix deduplication, predictive delta
coding, and per-vector quantization---is theoretically capable of compression
ratios far beyond what per-vector methods can achieve, with the compression ratio
improving rather than degrading as context length grows.

TurboQuant is not the ceiling.  It is the ceiling of one approach to one
subproblem.  The sequential structure of language points toward a different ceiling,
considerably lower, and the path to it runs through the formal language the model
has already learned.

\bibliographystyle{plain}
\bibliography{references_kv}

\end{document}